\newtheorem{theorem}{Theorem}
\newtheorem{lemma}{Lemma}
\newtheorem{definition}{Definition}
\newcommand{\kg}{\mathcal{G}}
\newcommand{\entities}{\mathcal{E}}
\newcommand{\relations}{\mathcal{R}}
\newcommand{\uf}{\mathcal{P}}
\newcommand{\ukge}{\hat{P}}
\newcommand{\method}{SRC}
\newcommand\incircbin
\newcommand\@incircbin[2]
\newcommand{\oland}{\incircbin{\land}}
\newcommand{\olor}{\incircbin{\lor}}
\newcommand*\circled[1]{\tikz[baseline=(char.base)]{
            \node[shape=circle,draw,inner sep=1pt] (char) {#1};}}
\newcommand{\ozero}{\circled{0}}
\title{Extending Complex Logical Queries on Uncertain Knowledge Graphs}
\author{Weizhi Fei$^{*\clubsuit}$, Zihao Wang\thanks{ Equal Contribution} $^{\dagger}$,  Hang Yin$^{\clubsuit}$, Yang Duan$^\bigstar$,  Yangqiu Song$^{\dagger}$ \\
$\clubsuit$ Department of Mathematical Sciences, Tsinghua University, Beijing, China\\
$\dagger$ CSE, HKUST, HKSAR, China\\
$\bigstar$ Department of Computer Science, Princeton University, Princeton, United States \\
\texttt{\{fwz22,h-yin20\}@mails.tsinghua.edu.cn} \\
\{zwanggc,yqsong\}@cse.ust.hk, yd1202@cs.princeton.edu
}
\begin{document}
\maketitle
\begin{abstract}
The study of machine learning-based logical query answering enables reasoning with large-scale and incomplete knowledge graphs. This paper advances this area of research by addressing the uncertainty inherent in knowledge. While the uncertain nature of knowledge is widely recognized in the real world, it does not align seamlessly with the first-order logic that underpins existing studies. To bridge this gap, we explore the soft queries on uncertain knowledge, inspired by the framework of soft constraint programming. We propose a neural symbolic approach that incorporates both forward inference and backward calibration to answer soft queries on large-scale, incomplete, and uncertain knowledge graphs. Theoretical discussions demonstrate that our method avoids catastrophic cascading errors in the forward inference while maintaining the same complexity as state-of-the-art symbolic methods for complex logical queries. Empirical results validate the superior performance of our backward calibration compared to extended query embedding methods and neural symbolic approaches.
\end{abstract}

\section{Introduction}

Representing and reasoning with factual knowledge are essential functionalities of artificial intelligence systems. As a powerful way of knowledge representation, Knowledge Graphs (KGs)~\citep{Miller1995WordNetlexical,suchanek_yago_2007,vrandecic_wikidata_2014} use nodes to represent entities and edges to encode the relations between entities. Recently, Complex Query Answering (CQA) over KGs has attracted considerable attention because this task requires multi-hop logical reasoning over KGs and supports many applications~\citep{ren_neural_2023}. 
This task requires answering the existential First Order Logic (FOL) query, involving existential quantification ($\exists$), conjunction ($\land$), disjunction ($\lor$), and negation ($\lnot$). While answering FOL queries has been extensively researched by database community~\citep{riesen2010exact,hartig2007sparql}, such studies overlook the \textbf{incompleteness} of most KGs. Consequently, conventional graph traversal methods for \textit{relational database queries} may neglect certain answers due to the missing links of KGs. In recent studies on \textit{complex logical queries on knowledge graphs}, the generalizability of machine learning models is leveraged to predict the missing links of observed KGs and conduct first-order logic reasoning~\citep{ren_beta_2020,arakelyan_complex_2021,liu_neural-answering_2021,wang_logical_2023}. This combination of machine learning and logic enables further possibilities in data management~\citep{ren_neural_2023}.

\begin{figure*}[t]
    \centering
    \includegraphics[width=.8\linewidth]{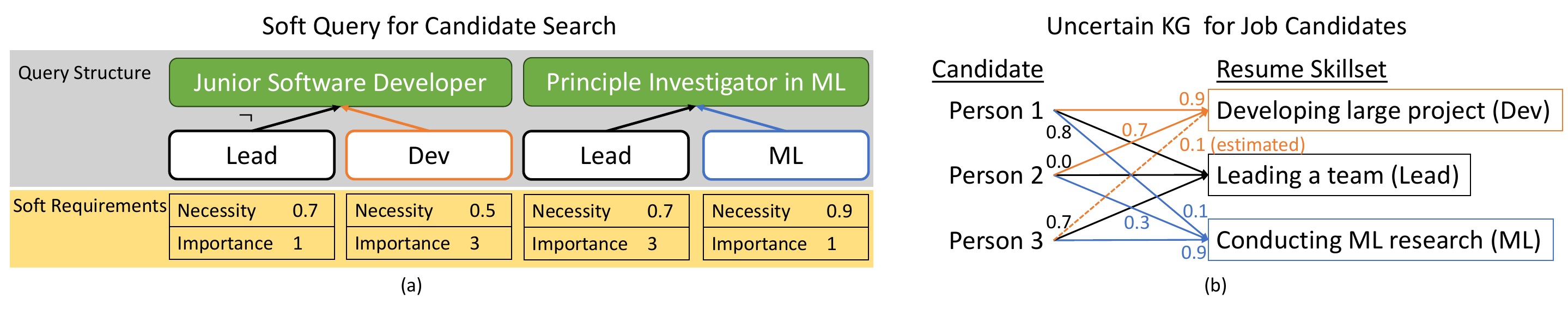}\vskip-1em
    \caption{\textbf{(a)} Examples of two soft queries in the candidate search procedure. The soft queries introduced in this paper are jointly defined by first-order logic and soft requirements. In particular, soft requirements (necessity and importance) are introduced to characterize fine-grained decision-making preferences, distinguishing them from first-order queries. \textbf{(b)} Incomplete uncertain KG for to what extent a candidate possesses a skill. Solid lines indicate the observed knowledge, while dashed lines indicate the unobserved data. Values indicate confidence level, where the higher value indicates the fact is more likely to be true.}\vskip-1em
    \label{fig:example-of-soft-query}
\end{figure*}
The uncertainty of knowledge is widely observed ranging from daily events~\citep{zhang2020aser} to complex biological systems~\citep{szklarczyk2023string}. \textit{To represent the uncertain knowledge}, confidence values $p$ are associated with facts to augment the KG~\citep{carlson2010toward,robyn2017conceptnet,szklarczyk2023string}, known as the uncertain KG. As exemplified in the right of Figure~\ref{fig:example-of-soft-query}, which illustrates the Job Candidates uncertain KG, confidence values are used to quantify the degree of practice level associated with specific skills. Uncertainty is prevalent in existing KGs primarily because most are constructed using machine learning models, where the predicted likelihood of relational facts inherently introduces uncertainty. Notable examples include O*NET~\citep{pai2021learning}, STRING~\citep{szklarczyk2023string}, ConceptNet~\citep{robyn2017conceptnet}, and AESR~\citep{zhang2020aser}.  To address the incompleteness of uncertain KGs, recent studies estimate the confidence values of missing facts with the generalization power of ML models~\citep{chen2019embedding,pai2021learning}.

\begin{table*}[t]
\caption{Comparison of different problem settings. FO: First Order, EFO: Existential First Order.}\label{tb:setting-comparison}
\centering
\resizebox{1.0\linewidth}{!}{
\begin{tabular}{llll}
\toprule
Problem Settings & Language of reasoning & Uncertainty of Knowledge  & Unobserved Knowledge \\ \midrule
Relational database & FO                                 & -                    & -                                        \\
Probablistic database & FO & Confidence value $p$ & -                                        \\
Open world probablistic database & FO & Confidence value $p$ & Uniformed $p_u$ for all unobserved facts \\
Complex logical queries on KG & EFO                     & -                    & ML generalization to unobserved facts \\
Soft queries on uncertain KG (Ours) & EFO + Soft requirements & Confidence value $p$ & ML generalization to unobserved facts and confidence $p$      \\ \bottomrule
\end{tabular}
}
\end{table*}


To reason with uncertainty, many extensions of first-order logic have been made to cope with the uncertainty in knowledge representation systems formally~\cite{adams1996primer}. It is noteworthy to mention that the study of \textit{probabilistic databases} also extends the relational databases with a confidence value $p\in[0,1]$~\cite{cavallo1987theory,suciu2022probabilistic}. From a machine learning perspective, however, previous studies in \textit{open world probabilistic databases} are limited from two aspects: (1) They assume uniform uncertainty for all unobserved knowledge~\cite{ceylan2021open}, leading to weaker characterization of the incomplete knowledge without the generalization. 
(2) They focus on first-order logic, which might be insufficient to describe practical reasoning processes with uncertainty. To address the limitations, we extend the complex query answering to uncertain KG and propose soft queries combining query structure and soft requirements, as shown in the left of Figure~\ref{fig:example-of-soft-query}.


This paper studies the machine learning method for reasoning with incomplete and uncertain knowledge, advancing previous studies in symbolic probabilistic databases. Its contribution is threefold.

\noindent\textbf{Contribution 1: A novel and practical setting.}  We propose a novel setting of Soft Queries on Uncertain KG (SQUK). Our setting extends the previous setting of \textit{complex logical queries on KGs} in two ways:
(1) For the incomplete knowledge base, SQUK extends the incomplete KG to incomplete and \textit{uncertain} KG. (2) For the language describing reasoning, SQUK extends first-order language to \textit{uncertainty-aware} soft queries with soft requirements, which are motivated by real-world reasoning with uncertainty and the establishment in soft constraint programming~\citep{schiex1992possibilistic,rossi_handbook_2006}. 
We also introduce the formal definition in Section~\ref{sec:formulation}. The comparison of SQUK against other settings is detailed in Table~\ref{tb:setting-comparison}. 

\noindent\textbf{Contribution 2: ML method for soft queries.} We bridge machine learning and SQUK by proposing Soft Reasoning with calibrated Confidence values (\method), which uses Uncertain Knowledge Graph Embeddings (UKGEs) to tackle the unobserved information and achieves the same computational complexity as the state-of-the-art inference algorithms~\citep{bai_answering_2023,yin2024rethinking}. The error analysis is also conducted for \method~ given the error bound function, characterizing how the performance is affected by UKGEs and the query structures. Based on our analysis, we suggest calibrating the confidence by debiasing and learning, which further boosts the performance of \method.

\noindent\textbf{Contribution 3: Extensive empirical studies.} We also conduct extensive empirical studies to benchmark the performance of a broad spectrum of methods under the UKGE~\citep{chen2019embedding} settings. The calibrated \method~is compared against baselines including  
Query Embedding ~\citep{ren_beta_2020} methods with Number Embeddings (QE+NE)~\citep{vaswani_attenion_2017} and symbolic search method~\citep{yin2024rethinking}. In particular, we compared the differences between QE+NE and \method~under various soft query settings, demonstrating the advantage of \method. We also make a fair comparison with large language models on annotated soft queries in a natural language setting. 

We highlight the uniqueness of the SQUK setting by examining the differences between uncertain KGs and KGs, comparing soft queries with logical queries, and addressing the challenges posed by two versions of incomplete knowledge. These differences are also summarized in Table~\ref{tb:setting-comparison}. Additionally, we present the related work concerning complex logical query answering and uncertain knowledge graph embedding in Appendix~\ref{related work}.
\section{Background}\label{background}



Uncertain KGs enhance traditional KGs by augmenting each triple fact with a confidence value, thereby facilitating the modeling of uncertain knowledge, which is particularly useful in various domains. Figure~\ref{fig:example-of-soft-query} illustrates uncertainty in job backgrounds. We formally define an uncertain knowledge graph as a set of knowledge as follows:
\begin{definition}[Uncertain knowledge graph]
    Let $\entities$ be the set of entities and $\relations$ be the set of relations, an uncertain knowledge graph $\mathcal{G}$ is a set of quadruple $\{(s_i,r_i,o_i,p_i)\}$, where $s_i,o_i\in \entities$ are entities, $r_i\in \entities$ is relation and $p_i \in [0,1]$\footnote{The values of uncertain KGs also indicate the strength or importance. For simplify, previous work ~\citep{pai2021learning} normalized the range of values into the interval $[0, 1]$.} represents the confidence value for the relation  fact $(s_i,r_i,o_i)$. This confidence value $p_i$ indicates the degree of certainty regarding the truth of the fact.
\end{definition}

Following the closed-world assumption~\citep{reiter1981closed}  and treating all unobserved facts as false, we can derive the weight graph form for uncertain KG and represent it with the confidence function $P: \entities\times\relations\times\entities\mapsto [0, 1]$ as follows:
    \begin{equation}
        P(h_i, r_i, t_i) = \left\{\begin{array}{cc}
            p_i & (h_i, r_i, t_i, p_i) \in  \mathcal{G}, \\
            0 & {\rm otherwise}.
            \end{array}\right.
    \end{equation}

Uncertain KGs also suffer incomplete issues~\citep{chen2019embedding,chen2021probabilistic}, with observed knowledge representing only a small portion of the total facts. The assumption that all unseen relational facts are false is inappropriate in real-world scenarios. To address this challenge, previous research on uncertain KGs has proposed a machine learning task to predict the confidence scores of these unseen relational facts~\citep{chen2019embedding,chen2021probabilistic}. Typically, the observed knowledge in uncertain KGs is split into three nested sets of facts,  where $\mathcal{G}_{\text{train}} \subsetneq \mathcal{G}_{\text{valid}} \subsetneq \mathcal{G}_{\text{test}}$. The training set  $\mathcal{G}_{\text{train}}$ is used to train the model, while the validation and test sets are used to evaluate its performance in predicting the confidence scores of unseen facts.


Uncertain Knowledge Graph Embeddings (UKGEs)~\cite{chen2019embedding,chen2021probabilistic} have been the mainstream methods for predicting unseen relational facts in uncertain KGs, as they learn low-dimensional representations that effectively capture the semantics between relations and facts, demonstrating strong generalizability. UKGEs are trained on partial facts $\mathcal{G}_{\text{train}}$ and approximate the confidence function $\uf$ deriving from complete facts,  defined as the following confidence function:

\begin{definition}
    An UKGE parameterizes a differentiable confidence function $\ukge: \entities \times \relations \times \entities \mapsto [0, 1]$.
\end{definition}
In practice, obtaining complete facts is challenging, so  $P_{\text{test}}$ induced by $\mathcal{G}_{\text{test}}$ are usually substituted for $\uf$, which  adhere to previous approaches~\cite{chen2019embedding,pai2021learning}. We present the connection between this setting and the open-world assumption in Appendix~\ref{app:connection with owa}.

\section{Soft Queries}
\label{sec:formulation}
The uncertainty inherent in  KGs can be modeled using confidence values for each knowledge. However, current complex logical queries are defined on a boolean basis~\footnote{For details on logical queries, please refer to Appendix~\ref{logical queries}.}, which is not compatible with uncertain KGs. This uncertainty necessitates new definitions for logical operations and answer sets. In this section, we introduce the definition of our extended soft queries.

\subsection{Syntax and semantic}

\begin{definition}[Syntax of soft queries]\label{def:soft-efo-query}
Soft queries are the disjunction of soft conjunctive queries $\phi_i$:
\begin{align}\label{eq:soft-query-dnf}
    \Phi(y) = \phi_1 (y) \olor \cdots \olor \phi_q (y),
\end{align}
where $y$ is the free variable. Each  $\phi_i(y)$ is the conjunction of the soft atomic formula:
\begin{small}
\begin{align}\label{eq:soft-conjunctive-query}
    \phi_i(y) = \exists x_1, \dots, x_{n}. a_{i1} \oland \cdots \oland a_{ij}, i=1,...,q,
\end{align}
\end{small}
where $x_1, \dots, x_{n}$ represent existentially quantified variables.  
Each $a_{i}$ is a soft atomic formula of the form $(h,r,t,\alpha, \beta)$ or its negation $\lnot (h,r,t,\alpha, \beta)$. Here, $r$ denotes the relation, $h$ and $t$ can be either an entity in $\entities$ or a variable in $\{y, x_1, ..., x_n\}$. $\alpha$ represents the necessity value, and $\beta$ represents the importance value. The $\oland$ and $\olor$ represent soft conjunction and disjunction operation. 
\end{definition}

\begin{definition}[Substitution]
    For a soft query involving variables, the substitution replaces all occurrences of the variable $x$ (or $y$)  with any entity $s \in \entities$ simultaneously, denoted as $s/x$ (or $s/y$).
\end{definition}
We denote $\phi(s)$ for the result of substituting $s$ for the free variable $y$. When all variables in the soft query $\phi$ have been substituted, we refer to it as the substituted query. Next, we define the semantics of the soft queries, starting with the soft atomic formula. Specially, the soft atomic formula involves two novelty concepts: $\alpha$ necessity and $\beta$ importance, which are inspired form soft Constraint Satisfaction Problems (CSPs)~\cite{rossi_handbook_2006} to manipulate the uncertainty of facts. We introduce the related work of soft CSPs in Appendix~\ref{app:semiring-CSP}.

1. The $\alpha$ necessity component draws inspiration from possibilistic CSPs~\citep{schiex1992possibilistic} and is designed to capture \textbf{necessity criteria}. It serves the purpose of filtering out unnecessary constraints and involves a thresholding operation.
The thresholding operation $[p]_{\alpha}$ is defined as follows:
        \begin{align}
            [p]_\alpha = \left\{\begin{array}{cc}
            p & p \ge  \alpha, \\
            \ozero & {\rm otherwise}.
            \end{array}\right.
        \end{align}
        
2. The $\beta$ importance component is influenced by weighted CSPs~\citep{bistarelli1999semiring} to describe \textbf{preference}, which is the weight employed to adjust the relative significance of different conditions.



\begin{definition}[Semantic of soft queries]
Given a semiring $(\mathbb{R}^{+}, \oplus, \otimes, \ozero)$ over $\mathbb{R}^{+}$, the confidence function $P$ induced by an uncertain knowledge graph $\mathcal{G}$, and a soft  query $\phi$, let $s$ and $o$ be entities in $\entities$. The confidence value $U(\phi, P)$ is recursively defined as follows:
    \begin{compactenum}
        \item If $\phi$ is the substituted soft atomic query $(s,r,o,\alpha, \beta)$, then $U(\phi, P) =  \beta[P(s,r,o)]_{\alpha}$;
        \item If $\phi$ is the negation of the substituted soft atomic $\lnot (s,r,o,\alpha, \beta)$, then $U(\phi, P) =  \beta[1 - P(s,r,o)]_{\alpha}$;
        \item If $\phi = \exists x_i \psi(y; x_i)$ is the soft query involving existentially quantified variables, then $U(\phi, P) = \oplus_{s\in\entities} U(\phi(y;s/x_i), P)$;
        \item If $\phi$ is the conjunctive query $(\phi_1 \oland \phi_2)$, then  $U(\phi, P) = U(\phi_1, P) \otimes U(\phi_2, P)$.
        \item If $\Phi$ is the disjunctive query $(\Phi_1 \olor \Phi_2)$, then $U(\Phi, P) = U(\Phi_1, P) \oplus U(\Phi_2, P)$.
    \end{compactenum}
\end{definition}

To align with the semantics of the confidence value, we instantiate the semiring as $(\otimes,\oplus,\ozero) = (+,\max,-\infty)$.  We discuss the utilized semiring of the previous soft CSP setting in Appendix~\ref{app:semiring-CSP}. With semantics, we can compute the utility of entity $s$ for the soft query $\phi$. The utility of all entities can be conveniently represented as a vector.

\begin{definition}[Utility vector]\label{def:utility vector}
    Given a confidence function $P$ induced by an uncertain knowledge base and a soft query $\phi$, the utility vector of soft query $\Phi$, denoted as $\mathbf{u} \in \mathbb{R}^{|\entities|}$, is defined as:
    \begin{equation}
        \mathbf{u}_i = U(\Phi(s_i/y), P),
    \end{equation}
    where $s_i$ denotes the entity indexed by $i$.
\end{definition}

\subsection{Example to explain the soft queries, as well as the necessity  and importance}

Soft queries are a powerful tool for modeling candidate searches across various job positions, enabling nuanced assessments of qualifications. To illustrate this, we present an example of using soft queries to model candidate searches for two roles: Junior Software Developer (JSD) and Principal Investigator (PI) in machine learning. Let \textsc{Has} denote the relation describing a candidate’s possession of a skill, while \textsc{LEAD}, \textsc{DEV}, and \textsc{ML} represent leadership, development, and machine learning skills, respectively. Regarding leadership, the Principal Investigator places a significantly high emphasis on it, while the Junior Software Developer does not. This distinction is captured by the importance parameter $\beta$, which assigns greater weight to leadership in the query for the Principal Investigator. Additionally, the necessity parameter $\alpha$ serves as a threshold to filter out candidates who lack the required skills. The two roles can be modeled using the following soft queries, as further explained in Figure~\ref{fig:example-of-soft-query}:

\begin{small}
\begin{align*}
\phi_{\rm JSD}(y) = & \lnot (y, \textsc{Has}, \textsc{Lead}, 0.7, 1)) \oland (y,\textsc{Has}, \textsc{Dev}, 0.5, 3),\\
\phi_{\rm PI}(y) = & (y, \textsc{Has}, \textsc{Lead}, 0.7, 3) \oland (y,\textsc{Has}, \textsc{ML}, 0.9, 1)).
\end{align*}
\end{small}

Beyond recruitment, soft queries provide an effective and tailored strategy for modeling logical queries on uncertain KGs. This capability can extend the application of logical queries to uncertain KGs, enabling use cases such as product recommendation~\citep{bai2024advancing} and understanding user intentions~\citep{bai2024understanding}.


\subsection{Soft query graph and utility vector}
\begin{definition}[Soft query graph]
Given a soft query $\phi(y;x_1, ..., x_n) = \exists x_1, ..., x_n. a_1 \oland \cdots \oland a_{m}$,
the soft query graph $G_\phi$ is defined by tuples induced by soft atomic formulas or its negation: $G_\phi = \{ (h_i, r_i, t_i, \alpha_i, \beta_i, \textsc{Neg}_i) \}_{i=1}^{m}$,
where $(h_i, r_i, t_i, \alpha_i, \beta_i, \textsc{Neg}_i))$ is induced by $ (h_i,r_i,t_i,\alpha_i,\beta_i)$ or $ \lnot ( h_i,r_i,t_i, \alpha_i,\beta_i)$. $\textsc{Neg}_i$ is the bool variable indicating if  $a_i$ is negated.
\end{definition}

If $h$ (or $t$) is a variable, we say the corresponding node in $G_\phi$ is a \textit{variable node}. $V(G_\phi)$ indicates the set of all variable nodes in $G_\phi$.
If $h$ (or $t$) is an entity, we say the corresponding node is an \textbf{constant node}. The \textbf{leaf node} is a node that is only connected to one other node in the query graph. Compared to the operation tree in complex logical queries~\citep{ren_beta_2020}, the soft query graph can model any conjunctive soft queries.

\section{Methodology}\label{sec:method}


In this section, we propose Soft Reasoning with calibrated Confidence values (\method) to facilitate reasoning with various query structures and soft requirements. \method ~is a symbolic reasoning method that utilizes UKGE to provide confidence values. Since UKGE inevitably has prediction errors, we present a mild assumption regarding the UKGE error bound in Equation~\eqref{eq:ukge-error-assumption}. Our error analysis over \method~indicates that the inference error is manageable as the complexity of the query structure increases. To further reduce this error, we introduce two orthogonal calibration strategies: \textit{Debiasing} (D) and \textit{Learning} (L).

\subsection{Forward inference}\label{sec:inference}
\begin{figure*}[h]
  \centering
  \includegraphics[width=.95\textwidth]{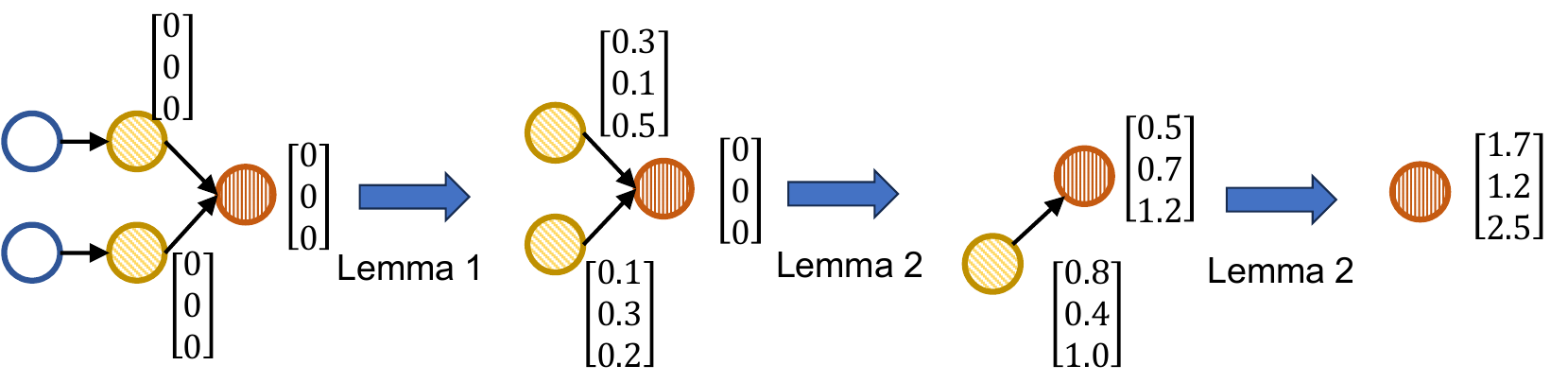}
  \caption{A toy model illustrating the process of the \method. Each variable node is assigned a state vector, which is updated through the algorithm as edges are removed. The final state vector of the free variable is the desired.}
  \label{fig: algorithm}
  \vspace{-1.5em}

\end{figure*}

The main paper discusses soft queries in which the query graphs are acyclic simple graphs. Cases with the complete case (cycles and self-loops) are detailed in Appendix~\ref{details methods}.

Given the soft query $\phi$, \method~ efficiently derives the utility vector $U(\phi, \ukge)$ based on the confidence function $\ukge$ approximated by UKGE.  The core idea is to progressively prune the edges of the soft query graph while preserving the constraints of the remaining edges, ensuring that the final utility vector remains unchanged. State vectors are used to record the constraints of the pruned edges during the inference process. Specifically, each variable node $z\in G_\phi$ is described by a \textit{state vector} $C_z \in \mathbb{R}^{|\mathcal{E}|}$. The notation $(G_\phi, \{C_z: z\in V(G_\phi)\})$ denotes a soft query graph with state vectors. 

We define equivalent transformations as $T$: 
{\scriptsize
\begin{equation}
    T(G_{\phi}, \{C_z: z\in V(G_\phi)\}) = (G_{\psi},\{C_z':  z\in V(G_\psi)\}),
\end{equation}}
where $G_\psi$ is a subgraph of $G_\phi$ (with at least one edge eliminated), $C_z'$ is the updated state vector, and $T$ guarantees the utility vector $\hat{\mathbf{u}}$ unchanged.


Two lemmas are presented to induce two equivalent transformations, denoted as $T_e$ and $T_l$, respectively. The proof can refer to Appendix \ref{details methods}.
\begin{lemma}\label{Remove_constant_nodes}
For each constant node in $G_\phi$, an $O(|\mathcal{E}|)$ transformation $T_c$ exists to remove it. 
\end{lemma}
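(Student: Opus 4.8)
The plan is to remove a constant node $c$ by \emph{absorbing} each edge incident to it into the state vector of the node at its other endpoint, exploiting the fact that a constant endpoint fixes one argument of the confidence function, so that the edge's contribution becomes a function of the remaining endpoint alone. Concretely, consider an incident soft atom $(c, r, z, \alpha, \beta, \textsc{Neg})$ (the form $(z, r, c, \alpha, \beta, \textsc{Neg})$ is symmetric). First I would form the contribution vector $v \in \mathbb{R}^{|\entities|}$ with $v_i = \beta[P(c, r, s_i)]_\alpha$ when $\textsc{Neg}$ is false and $v_i = \beta[1 - P(c, r, s_i)]_\alpha$ when it is true, where $s_i$ is the entity indexed by $i$. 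Since $c$ and $r$ are fixed, each entry is precisely the value the recursive semantics assigns to this atom under the substitution $s_i/z$, so $v$ records the atom's contribution for every possible value of $z$.

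The update then depends on the type of $z$. If $z$ is a variable node, I set $C_z' = C_z \otimes v$, that is $C_z' = C_z + v$ under the instantiated semiring $(\otimes, \oplus) = (+, \max)$, and delete the edge together with $c$. If $z$ is itself a constant (a constant-constant edge), the atom evaluates to a single scalar $\beta[P(c, r, z)]_\alpha$ (or its negated form), which I fold into a global offset, or equivalently into any surviving variable's state vector, since adding a fixed scalar commutes with the subsequent $\max$ aggregation. Repeating this over all edges incident to $c$ removes the node.

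The correctness claim is that the resulting graph-with-states evaluates to the same utility vector, and I would argue this by induction over the recursive semantic definition. Conjunction is interpreted by $\otimes = +$, so the utility of any substituted query factorizes as the $\otimes$-product (sum) of its atoms' values; detaching one atom and recording its value-as-a-function-of-$z$ inside $C_z$ leaves this product unchanged for every assignment, because the state vector is consulted at exactly the index corresponding to the value later assigned to $z$. The only place requiring care is the existential step, where $U(\exists x_i. \psi, P) = \oplus_{s} U(\psi(s/x_i), P) = \max_s U(\psi(s/x_i), P)$: here the absorbed contribution must be incorporated \emph{before} the $\max$, which holds because $v$ enters additively into each $U(\psi(s/x_i), P)$ prior to maximization. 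This interaction between the additive absorption ($\otimes = +$) and the $\max$-aggregation ($\oplus$) of the existential quantifier is the main subtlety, and verifying it is the crux of the preservation argument.

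For the complexity, computing $v$ requires one confidence query $\ukge(c, r, s_i)$ per entity, each $O(1)$ through the trained UKGE, followed by an $O(1)$ threshold $[\cdot]_\alpha$ and multiplication by $\beta$, for $O(|\entities|)$ total; the elementwise $\otimes$-update of $C_z$ is likewise $O(|\entities|)$. Treating the local degree of $c$ (bounded by the fixed query size) as a constant, removing the node is therefore $O(|\entities|)$, which establishes the transformation $T_c$.
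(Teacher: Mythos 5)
Your proposal is correct and follows essentially the same route as the paper's proof: the edge contributions of a constant node are precomputed as an $O(|\entities|)$ vector indexed by the neighboring variable's value and absorbed additively into that variable's state vector, with correctness resting on the fact that the additive conjunction ($\otimes = +$) enters inside each term before the existential $\max$ aggregation. Your treatment is marginally more explicit in carrying the $\alpha,\beta$ requirements and the constant--constant edge case, which the paper handles by reduction to $\alpha=0,\beta=1$ and by omission, respectively, but the underlying argument is identical.
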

Realization of the equivalent transformation $T_c$ induces a function $\textsc{RemoveConstNode}$.
\begin{lemma}\label{Remove_leaf_nodes}
 For each leaf node in $G_\phi$, an $O(|\mathcal{E}|^2)$ transformation $T_l$ exists to remove it. 
\end{lemma}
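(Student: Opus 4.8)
The plan is to show that an existentially quantified leaf variable $x$ --- a variable node incident to exactly one edge --- can be eliminated by folding its entire contribution into the state vector of its unique neighbor, and that this fold costs $O(|\entities|^2)$. Correctness will follow entirely from the distributivity of the chosen semiring $(\oplus,\otimes) = (\max,+)$.

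First I would describe the leaf's contribution. Let $x$ be the leaf (never the free variable $y$, which is never eliminated), let $z$ be its unique neighbor variable node, and let $a$ be the single edge joining them. For a substitution $s_i$ of $z$ and $s_j$ of $x$, write $v_a(s_i,s_j)$ for the soft atomic value of $a$; depending on the orientation of $a$ and on whether it is negated, $v_a(s_i,s_j)$ equals $\beta[\ukge(s_i,r,s_j)]_\alpha$, or $\beta[\ukge(s_j,r,s_i)]_\alpha$, or the same expressions with $\ukge$ replaced by $1-\ukge$. All four cases are handled by the same argument, so I would treat $v_a$ abstractly.

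Next comes the core computation. Because $x$ is a leaf, it occurs only in the edge $a$ and is carried elsewhere only through its own state vector $C_x$; it appears nowhere else in the substituted query. By the recursive semantics the utility therefore factors, via $\otimes = +$, into a part independent of $x$ and a part obtained by evaluating $\exists x$ --- that is $\bigoplus_{s_j} = \max_{s_j}$ --- over the sum of the edge value and the accumulated state at $x$. Applying distributivity of $\otimes$ over $\oplus$, namely $c + \max_j b_j = \max_j(c + b_j)$, I would absorb this maximum into the neighbor and define the updated state vector by
\begin{equation}
    C_z'[i] = C_z[i] + \max_{j}\bigl( C_x[j] + v_a(s_i,s_j) \bigr).
\end{equation}
Deleting $x$ and $a$ from $G_\phi$ and replacing $C_z$ by $C_z'$ then produces a subgraph $G_\psi$ with one fewer edge and the same utility vector, which is exactly an equivalent transformation $T_l$. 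The thresholding operator $[\cdot]_\alpha$ needs no special care: an unmet threshold yields $\ozero = -\infty$, which is the identity for $\max$ and the annihilator for $+$, so it propagates correctly through the max-plus arithmetic.

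Finally, complexity and the main difficulty. The vector $C_z'$ has $|\entities|$ entries, and each entry is a maximum over the $|\entities|$ substitutions $s_j$, so the whole update is $O(|\entities|^2)$, matching the claim. I expect the delicate point to be the correctness argument rather than the counting: one must verify that the leaf property genuinely isolates $x$ into the single factor maximized above, so that semiring distributivity applies and absorbing the maximum into $C_z$ loses no cross-terms with the rest of $G_\phi$. Once this isolation is established, equality of the utility vectors is immediate from the recursive semantics.
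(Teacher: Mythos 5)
Your proof is correct and uses essentially the same mechanism as the paper's: fold the leaf's state vector and its single edge into the neighbor via the max-plus identity $c+\max_j b_j=\max_j(c+b_j)$, which is exactly the paper's construction of the matrices $M_1$ and $M_3$ followed by a row/column max, at cost $O(|\mathcal{E}|^2)$. You unify into one formula what the paper splits into separate sub-lemmas according to whether the leaf's neighbor is the free variable or an existential variable; since the update is identical in both cases, this is a harmless (and slightly cleaner) packaging. The one point of divergence is that you exclude by fiat the case where the leaf node is the free variable $y$ itself, whereas the paper devotes a third sub-lemma to it (solving the subquery rooted at the existential neighbor as if that neighbor were free, then folding the resulting utility vector into $C_y$). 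Your restriction does not break the overall algorithm---an acyclic query graph with more than one node always has a non-free leaf to remove first---but it does leave the lemma's literal claim ``for each leaf node'' not fully covered, so you should either state the restriction in the lemma or add the paper's recursive handling of that case.
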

Realization of the equivalent transformation $T_l$ induces a function $\textsc{RemoveLeafNode}$.

\begin{algorithm}[t]
\caption{SRC (simple acyclic case)}\label{alg:srcc}
\begin{algorithmic}
\REQUIRE Input soft query graph ${G}_{\phi}$ and initialize the state vectors $\{ C_z \}$.
\ENSURE Output utility vector $\hat{\mathbf{u}}(G_{\phi}, \{ C_z \})$.
 \STATE {\small $( G_{\phi},\{ C_z\}) \gets \textsc{RemoveConstNode}( G_{\phi}, \{ C_z\})$ }
     \WHILE{There exists a leaf node} 
      \STATE {\small $(G_{\phi}, \{ C_z\}) \gets \textsc{RemoveLeafNode}(G_{\phi}, \{ C_z\})$ }
      \ENDWHILE
 \STATE Get the utility vector by retrieving $C_y$.
\end{algorithmic}
\end{algorithm}
\label{details methods}
The above two transformations are constructive and can be applied to remove edges once the corresponding nodes are found.  Until the soft query graph only contains a free variable node, the state vector of the free variable $C_y$ is the desired utility vector. The procedure of SRC for acyclic soft query graphs is presented in Algorithm~\ref{alg:srcc} and one toy example of the execution is visualized in Figure~\ref{fig: algorithm}. We first remove constant nodes, as these are commonly found and can be easily eliminated using Lemma~\ref{Remove_constant_nodes}. Subsequently, we identify and remove leaf nodes step by step, according to the Lemma~\ref{Remove_leaf_nodes}, noting that the leaf node always exists for acyclic queries.


\noindent\textbf{Complexity analysis.}  We begin with a rough estimation of the complexity. The space complexity of the inference algorithm is $O(|\mathcal{R}| |\mathcal{E}|^2)$. Let $n_e$ denote the number of edges involving existential variables and $n_r$ represent the remaining edges. For acyclic queries, the time complexity is $O(n_e |\mathcal{E}|^2 + n_r |\mathcal{E}|)$. We further reduce complexity by leveraging sparsity, a natural characteristic of knowledge graphs. For space complexity, only non-zero values or those exceeding a threshold $\delta_1$ are stored and the average sparsity ratio of our constructed neural matrices is approximately $3.4\%$, reducing storage requirements by $97\%$. For time complexity, sparsity can be used to accelerate computation. Removing a leaf node $x$ typically involves $O(E^2)$ operations, but by considering only rows where $C_x$ is non-zero, this reduces to $O(\mathcal{E} \cdot (|C_x > \delta_2|))$, where $|C_x > \delta_2|$ represents the non-zero number of $C_x$. Thus, the  time complexity becomes $O(n_e \mathcal{E} \cdot \max_x(|C_x > \delta_2|))$, where $n_e$ is the number of edges involving existential variables.

\subsection{Error analysis}\label{sec:error-analysis}
The current model still exhibits significant prediction errors, with the mean absolute error on CN15K generally reaching around 0.2~\citep{chen2021probabilistic,chen2021passleaf}. To facilitate the error analysis of our proposed search algorithm, we introduce the error bound $\varepsilon(\delta)$ as the following:
\begin{definition}
    Let $\varepsilon$ be a function that maps the error $\delta$ to a tail probability which can uniformly bounds the error of $\ukge$ for some kind of norm:
    {\small
    \begin{align}\label{eq:ukge-error-assumption}
        \Pr\left(\max_{(s,r,o)}\| \ukge(s,r,o) - \uf(s, r, o) \| > \delta\right) < \varepsilon(\delta).
    \end{align}}
\end{definition}
We note that $\varepsilon(\delta) < 1$ is guaranteed for all $\delta$ and $\varepsilon(\delta)$ decreases monotonically with $\delta$, though a better link predictor provides a tighter $\varepsilon(\delta)$. This definition provides a practical and flexible approach for analyzing the errors over uncertain KGs. We look into the error of each soft atomic query:
\begin{theorem}\label{thm:1p-analysis}
    For any soft atomic query $\psi = (h,r,y,\alpha, \beta)$, let the uniform inference error be
    $$\max_{\psi,s\in\entities}\|U(\psi(s/y), \ukge) - U(\psi(s/y), \uf)\| =\epsilon(\alpha,\beta)$$
    Then we estimate the distribution of $\epsilon(\alpha,\beta)$ by the uniform error-bound $\varepsilon(\delta)$ provided in Equation~\eqref{eq:ukge-error-assumption} and assume the probability density function of $\uf$ is $f(\xi)$ :
    {\scriptsize
    \begin{align}\label{eq:1p-bound}
     \Pr\left(\epsilon(\alpha,\beta) > \delta\right)\nonumber
     <\varepsilon(\frac{\delta}{\beta}) +
    (1-\varepsilon(\frac{\delta}{\beta}))\int_{0}^{1}\varepsilon(|\alpha-\xi|)f(\xi){\rm d}\xi.
    \end{align}}
\end{theorem}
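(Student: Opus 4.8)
The plan is to reduce the whole statement to the behaviour of the scalar quantity $\beta[\,\cdot\,]_\alpha$ and then separate the error into two distinct sources: an ordinary magnitude error and a threshold-crossing error. By the semantics of a substituted soft atomic query, $U(\psi(s/y),P) = \beta[P(h,r,s)]_\alpha$, so for each triple the error is $\beta\,\big|[\ukge(h,r,s)]_\alpha - [\uf(h,r,s)]_\alpha\big|$, and $\epsilon(\alpha,\beta)$ is the maximum of this quantity over the relevant $h,r$ and all $s\in\entities$. First I would introduce the global ``good'' event $A = \{\max_{(s,r,o)}\|\ukge(s,r,o)-\uf(s,r,o)\|\le \delta/\beta\}$, which by the error-bound assumption in Equation~\eqref{eq:ukge-error-assumption} satisfies $\Pr(A^c) < \varepsilon(\delta/\beta)$; the appearance of $\delta/\beta$ is forced by the multiplicative weight $\beta$ in the semantics.

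Conditioning on $A$ is the key structural step. On $A$ every pointwise prediction error is at most $\delta/\beta$, so whenever the true value $\uf(h,r,s)$ and the predicted value $\ukge(h,r,s)$ lie on the \emph{same} side of the threshold $\alpha$, the thresholding is transparent and the error is exactly $\beta\,|\ukge-\uf| \le \delta$. Consequently, on $A$ the only way to produce an error exceeding $\delta$ is a \emph{threshold crossing}, i.e.\ the true and predicted confidences falling on opposite sides of $\alpha$. This isolates the catastrophic contribution of the thresholding operation and is where all the nontrivial behaviour lives.

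Next I would bound the crossing probability. For a triple whose true confidence is $\uf = \xi$, a crossing forces the prediction across $\alpha$, which requires $\|\ukge - \uf\| > |\alpha - \xi|$; by the error bound this event has probability at most $\varepsilon(|\alpha-\xi|)$. Treating $\xi$ as distributed according to the density $f$ of $\uf$ and integrating yields a crossing bound of $\int_0^1 \varepsilon(|\alpha-\xi|)\,f(\xi)\,{\rm d}\xi$. Finally I would assemble the two pieces: writing $\Pr(\epsilon > \delta) \le \Pr(A^c) + \Pr(\epsilon>\delta\mid A)\Pr(A)$ and using that the map $q\mapsto q + c(1-q)$ is nondecreasing for $c\in[0,1]$, together with $\Pr(A^c) < \varepsilon(\delta/\beta)$, gives the stated bound $\varepsilon(\delta/\beta) + (1-\varepsilon(\delta/\beta))\int_0^1 \varepsilon(|\alpha-\xi|)f(\xi)\,{\rm d}\xi$; the factor $(1-\varepsilon(\delta/\beta))$ arises precisely from this conditional decomposition rather than from a looser additive union bound.

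The main obstacle I anticipate is the threshold-crossing analysis and its interaction with the uniform maximum over all triples and entities. Under the $-\infty$ semantics a single crossing yields an unbounded error, so the crux is not bounding the \emph{size} of the crossing error but bounding the \emph{probability} that any crossing occurs; care is needed to convert the pointwise estimate $\varepsilon(|\alpha-\xi|)$ into a statement about the maximum over $\entities$ and to justify that conditioning on the small-error event $A$ does not inflate the crossing probability beyond the unconditional integral. Securing the multiplicative factor $(1-\varepsilon(\delta/\beta))$ exactly, together with the legitimacy of treating $\uf$ through its density $f$, is the remaining point demanding attention.
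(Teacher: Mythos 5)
Your proof follows essentially the same route as the paper's: rescale by $\beta$ so that the relevant event becomes $\|\hat{x}-x\|>\delta/\beta$, split the error into a magnitude term (predicted and true confidence on the same side of $\alpha$, contributing $\varepsilon(\delta/\beta)$) and a threshold-crossing term, bound the crossing probability at true value $\xi$ by $\varepsilon(|\alpha-\xi|)$ and integrate against $f$, and recover the factor $(1-\varepsilon(\delta/\beta))$ by algebraic recombination --- the paper does this via a four-way case split on the positions of $\hat{x}$ and $x$ relative to $\alpha$ rather than your single good-event conditioning, but the two are equivalent. The conditioning subtlety you flag (that restricting to the small-error event could in principle inflate the crossing probability beyond the unconditional integral) is equally present, and equally unaddressed, in the paper's own proof, so your argument is correct to the same standard of rigor as the original.
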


 Moreover, the numerical stability is guaranteed:

\begin{theorem}\label{thm:numerical-stability}
    For a soft conjunctive query $\phi = \exists x_1, ..., x_n. a_{1} \oland \cdots \oland a_{m}$, where $a_i = (h_i,r_i,t_i,\alpha_i, \beta_i)$, and any entity $s\in\entities$, the error accumulated is at most linear:
    {\small
    \begin{align}
        \|U(\phi(s), \ukge) - U(\phi(s), \uf)\|\leq \Sigma_{i=1}^{m} \epsilon(\alpha_i,\beta_i).
    \end{align}
    }
    
\end{theorem}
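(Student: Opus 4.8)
The plan is to prove Theorem~\ref{thm:numerical-stability} by induction on the number of conjuncts $m$, exploiting the fact that the chosen semiring instantiation uses $\otimes = +$ for the soft conjunction. First I would unfold the recursive semantics: since $\phi(s) = a_1(s) \oland \cdots \oland a_m(s)$ and conjunction maps to addition, we have $U(\phi(s), \ukge) = \sum_{i=1}^{m} U(a_i(s), \ukge)$ and likewise for $\uf$, where each $a_i(s)$ is a substituted soft atomic query (the existential quantifiers having been resolved by the substitution $s$). The difference then telescopes into a sum of per-atom differences.

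The key step is to bound the total error using the triangle inequality for the chosen norm:
\begin{small}
\begin{align*}
\|U(\phi(s), \ukge) - U(\phi(s), \uf)\|
&= \Bigl\| \sum_{i=1}^{m} \bigl( U(a_i(s), \ukge) - U(a_i(s), \uf) \bigr) \Bigr\| \\
&\leq \sum_{i=1}^{m} \| U(a_i(s), \ukge) - U(a_i(s), \uf) \|.
\end{align*}
\end{small}
Each summand is the error of a single soft atomic query evaluated at $s$, which by the definition of the uniform inference error $\epsilon(\alpha_i, \beta_i)$ in Theorem~\ref{thm:1p-analysis} is bounded above by $\epsilon(\alpha_i, \beta_i)$, since that quantity is the maximum over all entities. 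Summing gives exactly the claimed bound $\sum_{i=1}^{m} \epsilon(\alpha_i, \beta_i)$, and the linearity in $m$ is then immediate.

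The main obstacle I anticipate is the handling of the existential quantifiers, which are not literally present once we write $\phi(s)$ but whose aggregation via $\oplus = \max$ must be shown not to inflate the error beyond what the atomic bounds give. Concretely, the subtlety is that $U(a_i(s), \cdot)$ for an atom containing an existentially quantified variable $x_j$ is itself a maximum over entities, so I would need the fact that $\max$ is nonexpansive, i.e. $\|\max_s g(s) - \max_s h(s)\| \leq \max_s \|g(s) - h(s)\|$, to fold that aggregation into the uniform atomic error $\epsilon(\alpha_i,\beta_i)$ without additional cost. The thresholding operation $[\cdot]_\alpha$ introduces a discontinuity, but since the atomic error $\epsilon(\alpha_i,\beta_i)$ is already defined to absorb the worst-case effect of thresholding (it is the supremum of the atomic discrepancy), no separate argument for the threshold is needed here; it is inherited from Theorem~\ref{thm:1p-analysis}. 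Verifying the nonexpansiveness of $\max$ and confirming that the semantics genuinely reduces conjunction to a plain sum under the $(+, \max)$ semiring are the two points I would check carefully, after which the result follows directly.
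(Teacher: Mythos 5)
Your proposal is correct and follows essentially the same route as the paper's proof: both expand $U(\phi(s),\cdot)$ as a maximum over assignments to the existential variables of the sum of atomic utilities, then combine the nonexpansiveness of $\max$ (i.e.\ $\|\max_\sigma g(\sigma)-\max_\sigma h(\sigma)\|\leq\max_\sigma\|g(\sigma)-h(\sigma)\|$) with the triangle inequality to reduce the total error to a sum of per-atom discrepancies, each bounded by the uniform error $\epsilon(\alpha_i,\beta_i)$. The two subtleties you flag for verification---the nonexpansiveness of $\max$ and the reduction of $\oland$ to ordinary addition under the $(+,\max)$ semiring---are exactly the steps the paper's chain of inequalities relies on, so no gap remains.
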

This conclusion ensures that there is no catastrophic cascading error in our forward inference algorithm. The proof of all the above theorems can refer to Appendix~\ref{app: error analysis}. 

\subsection{Two calibration strategies}\label{sec:calibration}

\noindent\textbf{Debiasing.} The confidence function $\ukge$ of UKGE is biased towards zero. We propose a debiasing strategy for the inference. We modify the soft requirements $\alpha$ as $\alpha - \Delta_\alpha$. We can see that this simple debiasing strategy improves performance. SRC with this strategy is denoted as SRC(D).

\noindent\textbf{Learning.}
The pre-trained UKGE is not optimal for SRC in the incomplete uncertain KGs. We propose the \textit{calibration by learning (L)} strategy by learning the calibrated confidence function. Specifically, we calibrate the confidence function $\hat P_c$ by learnable affine transformation~\citep{arakelyan_adapting_2023} as following:

\begin{small}
\begin{align}
     &\ukge_c(s,r,o) = \ukge(s,r,o) (1+ \rho_{\theta}(s,r,o)) + \lambda_{\theta}(s,r,o),\\ \notag
     & [\rho_{\theta}(s,r,o),\lambda_{\theta}(s,r,o)] = \sum_{j \in \{s,r,o\}} (W_j \mathbf{e}_j + b_j).
\end{align}
\end{small}
Here, $\rho_{\theta}(s,r,o)$ and $\lambda_{\theta}(s,r,o)$ are the affine parameters. $\mathbf{e}_j \in \mathbb{R}^{d}$ represents the embedding for entity or relation $j \in \{s,r,o\}$ with embedding dimension $d$. Meanwhile, $W_j \in \mathbb{R}^{2 \times d}$ and $b_j\in \mathbb{R}^{2}$ are the learnable parameters associated with $j$.

As implied by both Theorem~\ref{thm:1p-analysis} and Theorem~\ref{thm:numerical-stability}, the error of SRC is rooted in the error bound of UKGE. As we can see from Equation~\eqref{eq:1p-bound} the error bound is governed by both $\varepsilon$ and the integral of $\varepsilon$ over the domain $[0, \max(\alpha, 1-\alpha)]$. An important implication is that when $\alpha = 0$, the integral of $\varepsilon$ will be fully $[0, 1]$\footnote{The case of $\alpha=1$ ruled out almost all uncertain cases, which is not applicable in differentiable learning.}.
Therefore, our theoretical analysis motivates the goal of calibration as the minimization of the mean squared error between the predicted utility and the observed utility of answers:
\begin{equation}
\mathcal{L} = \sum_{\mathbf{u}(s) > 0} ( U(\phi(s), \hat P_c) - U(\phi(s), \uf ))^2,
\end{equation}
where $ U(\phi(s), \hat P_c)$ represents the predicted utility vector of a soft query $\phi$ according to Definition~\ref{def:utility vector}. SRC with this strategy is denoted as SRC(L).

Notably, we only need to train the calibration transformation in the cases of $\alpha = 0$, achieving a simpler training strategy but better generalization capability when compared to the QE+NE baselines, as will be presented in Appendix~\ref{app:exp-soft-requirements}.

\section{SQUK Dataset Construction}\label{dataset construct}
We provide a brief overview of dataset construction and the details can refer to Appendix~\ref{app:dataset}. 
\subsection{Useful queries and evaluation protocols}
The validation/test uncertain knowledge graph incorporates new facts that will update the utility vectors of specific soft queries. Only these particular queries are considered meaningful and included in the evaluation. The evaluation of soft queries not only considers recall but also accounts for the values of the recalled answers. Therefore, we adopt metrics from the learning-to-rank framework~\cite{liu2009learning} as our evaluation protocol, which includes Mean Average Precision (MAP), Normalized Discounted Cumulative Gain (NDCG), Spearman's rank correlation coefficient ($\rho$), and Kendall's rank correlation coefficient ($\tau$).

We choose 1P, 2P, 2I, 2IN, and 2IL as train query types and add 3IN, INP, IP, 2M, and IM as validation/test query types. The training query types encompass basic operations, allowing us to evaluate the ability of machine learning methods to generalize to commonly used unseen query types~\citep{yin2024rethinking}. We visualize the structure of these types in Figure~\ref{fig: query graphs}. Additionally, Table~\ref{tab:statistics train main} shows the statistics of training queries of our dataset.

\begin{table}[h]
\centering
  \vspace{-1em}

\caption{The statistics of train queries.}
\label{tab:statistics train main}
\resizebox{.8\linewidth}{!}{
\begin{tabular}{crrrrr}
\toprule
KG & 1P & 2P & 2I & 2IN &2IL  \\
\midrule
PPI5k              & 9,724  & 9,750  & 9,754 & 1,500 & 1,500    \\
O*NET20k           & 18,266  & 18,300  & 18,300 & 1,850 &1,850     \\
CN15k             & 52,887  & 52,900  & 52,900 & 5,300 & 5,300     \\
\bottomrule
\end{tabular}
}
\end{table}

\subsection{Uncertain KGs}
We utilize three standard uncertain KGs: CN15k~\citep{chen2019embedding} for encompassing commonsense, PPI5k~\citep{chen2019embedding} for biology, and O*NET20K~\citep{pai2021learning} for employment domains.   These uncertain KGs are noisy and incomplete, requiring the ML models to predict the confidence values.

\subsection{Soft requirements}\label{sec:soft_requirements}
For the $\alpha$ parameter, we establish connections with the percentile value of the relation to represent the necessity value effectively. We assign specific percentiles to different necessity levels: the 25th, 50th, and 75th percentiles correspond to ``low'', ``normal'', and ``high'' necessity criteria, respectively. We ensure that a ``zero'' requirement is assigned when the necessity criteria reaches 0. We also introduce a hybrid strategy that randomly selects necessity values, enabling a comprehensive evaluation.

For the $\beta$ importance setting, we employ two strategies: ``equal'' and ``random''. Under the ``equal'' strategy, all importance values are assigned an importance value of $1.0$. In contrast, the ``random'' strategy introduces variability by assigning random decimal numbers between 0 and 1 to represent the importance of each soft atomic formula. 


\section{Experiments}
In this section, we empirically explore how to answer soft queries. We mainly compare our method with generalized SoTA CQA models on SQUK dataset, including commonly used query embedding models and advanced symbolic search methods. Additionally, we evaluate the performance of advanced commercial LLMs on soft queries with clear natural language descriptions. The implementation details of these experiments are in Appendix~\ref{app: details implement}. We also conduct the ablation study regarding the distribution and impact of two parameters $\alpha$ and $\beta$ on both kinds of approaches, which is presented in Appendix~\ref{app:exp-soft-requirements} due to page limitations. We also provide the qualitative analysis of the experiments in Appendix~\ref{qualitative analysis}.



\begin{table*}
\scriptsize
\centering
\caption{Result of answering soft queries. 
Logic+NE and ConE+NE refer to the query embedding with number embedding extensions. SRC is our inference method, and SRC(D), SRC(L), and SRC(D+L) are explained in Section~\ref{sec:calibration}. The four metrics are all higher, indicating better performance. }
\label{Main result}
\resizebox{1\linewidth}{!}{
\begin{tabular}{@{}cccccccccccccccccc@{}}
\toprule
\multirow{2}{*}{Uncertain KG}      & \multirow{2}{*}{Models} & \multicolumn{13}{c}{$\tau$}                                                                            & \multirow{2}{*}{AVG. $\rho$} & \multirow{2}{*}{AVG. MAP} & \multirow{2}{*}{AVG. NDCG} \\ \cmidrule(lr){3-15}
                          &                     & 1P   & 2P   & 2I   & 2IN  & 2IL  & 2M   & 2U    & 3IN  & IP   & IM   & INP  & UP   & AVG.           &                         &                      &                       \\ \midrule
\multirow{6}{*}{CN15k}    & LogicE+NE              & 9.1  & -1.5 & 4.8  & 6.0  & 18.3 & 5.1  & -14.1 & 3.5  & -2.4 & 6.4  & -0.9 & 9.6  & 4.8           & 5.8                     & 7.0                  & 11.2                  \\
                          & ConE+NE               & 5.3  & 4.3  & 3.5  & 6.3  & 18.4 & 6.9  & 20.5  & 2.9  & 1.8  & 10.4 & 1.7  & 14.9 & 8.1           & 10.0                    & 7.7                  & 13.2                  \\
& SRC                 & 15.0 & 2.4  & -0.0 & 2.1  & 10.7 & 9.2  & 25.5  & -2.0 & -9.0 & 7.9  & -4.4 & 13.0 & 5.9           & 8.9                     & 9.2                  & 15.5                  \\
& SRC(D)           & 16.6&11.8&-0.6&6.9&10.9&11.7&34.4&0.1&5.2&12.5&4.7&24.2&11.5 & 15.1    & 12.9        & 21.8 \\
& SRC(L) & 15.8 & 11.8 & -0.4 & 2.4  & 11.0 & 12.4 & 32.3  & -0.8 & 3.6  & 11.1 & 1.1  & 22.8 & 10.3 & 13.7           & 12.6        & 21.1         \\ 
& SRC(D+L)           & 15.6&13.4&-0.3&5.2&11.2&13.5&36.8&-0.4&8.2&12.2&4.8&28.2&\textbf{12.4} & \textbf{16.2}           & \textbf{13.7}        & \textbf{23.2}         \\ 
                            \midrule
\multirow{6}{*}{PPI5k}    & LogicE+NE              & 20.5 & 22.6 & 17.1 & 10.4 & 24.4 & 20.0 & 30.4  & 9.1  & 12.4 & 14.1 & -2.6 & 32.9 & 14.8          & 20.7                    & 8.0                  & 16.4                  \\
                          & ConE+NE              & 29.2 & 42.5 & 26.4 & 20.7 & 32.6 & 33.9 & 35.9  & 16.6 & 36.6 & 29.4 & 22.5 & 42.2 & 30.7          & 40.8                    & 44.1                 & 49.2                  \\
& SRC                 & 66.6 & 70.9 & 49.9 & 42.7 & 71.0 & 42.9 & 71.6  & 32.7 & 65.6 & 37.1 & 57.2 & 70.4 & 56.5          & 66.7                    & 68.4                 & 70.7                  \\
& SRC(D)           & 66.7&68.7&53.1&44.9&73.1&46.7&73.3&37.7&63.7&41.3&56.9&69.9&58.0& 68.5    & 64.0        & 69.8 \\

& SRC(L)            & 66.8 & 71.7 & 52.7 & 43.4 & 72.8 & 43.8 & 72.7  & 34.4 & 66.6 & 38.3 & 58.0 & 71.4 & 57.7 & 67.8           & \textbf{69.8}        & \textbf{71.6}         \\ 
& SRC(D+L)           & 66.9&69.0&53.5&45.1&73.5&46.9&73.4&38.1&63.8&41.6&57.1&69.9&\textbf{58.2}& \textbf{68.7}    & 64.1       & 70.1 \\ 
                                                      \midrule
\multirow{6}{*}{O*NET20k} & LogicE+NE              & 6.3  & 9.5  & 43.5 & 3.9  & 36.6 & 9.7  & 15.3  & 8.3  & 11.1 & 8.8  & 3.8  & -9.8 & 13.8          & 18.5                    & 3.5                  & 6.4                   \\
                          & ConE+NE                 & 30.8 & 41.9 & 57.0 & 21.8 & 46.0 & 37.7 & 49.7  & 48.0 & 22.7 & 21.7 & 14.2 & 53.1 & 36.8          & 47.2                    & 27.5                 & 38.7                  \\ 
& SRC                 & 72.0 & 54.9 & 68.6 & 67.6 & 67.3 & 36.9 & 76.0  & 59.2 & 47.6 & 29.1 & 48.9 & 52.4 & 57.3          & 65.3                    & 27.1                 & 41.3                  \\
& SRC(D)           & 71.7&55.2&74.3&67.7&70.9&49.3&80.1&65.0&52.7&44.7&48.9&55.4&61.8& 70.2    & 26.6        & 41.5 \\
& SRC(L)             & 71.6 & 56.7 & 69.8 & 66.8 & 68.4 & 38.2 & 77.6  & 59.9 & 51.3 & 32.1 & 49.8 & 55.4 & 58.7 & 66.5           & \textbf{27.6}        & \textbf{41.8}         \\ 
& SRC(D+L)           & 71.7&55.6&74.3&67.5&71.0&49.7&80.2&65.0&52.9&45.2&49.2&55.9&\textbf{61.9}& \textbf{70.5}    & 26.7    & 41.7 \\
                       \bottomrule
\end{tabular}}
\end{table*}

\subsection{Main results}
\textbf{Baselines}
We select two mainstream CQA methods as baselines: query embedding and symbolic search. Specifically, we focus on two classical query embedding methods: LogicE~\citep{luus_logic_2021} and ConE~\citep{zhang_cone_2021}. To enable soft queries, we incorporate the relation projection network with  Number Embedding (NE) and adjust the loss function accordingly. \footnote{Detailed information can be found in Appendix~\ref{app:float_emb}.} The forward inference of our method, SRC, is directly generalized from SoTA symbolic methods FIT~\citep{yin2024rethinking}, which serve as baselines for search methods. 

\noindent\textbf{Models analysis.} 
The main results are presented in Table~\ref{Main result}, demonstrating that our proposed method significantly outperforms both query embedding methods and directly generalized symbolic methods. By leveraging the two calibration strategies, debiasing (D) and learning (L), as explained in Section~\ref{sec:calibration}, our method achieves superior results across most KGs and metrics on average.



\noindent\textbf{Query structure analysis.}
Although ConE performs well on some trained query types, it struggles with newly emerged query types and those involving negation, such as INP and IM. In contrast, our method exhibits excellent performance across the majority of query types, demonstrating robust combinatorial generalization capabilities on complex queries. Our method particularly excels in handling challenging query types that involve existential variables, such as 2P, 2M, IM, and INP, highlighting its advantages in these scenarios.

\subsection{The comparison with LLMs}
\begin{table}[h]
\centering
\caption{The accuracy of manually annotated queries.}
\label{llm eval}
\resizebox{\columnwidth}{!}{
\begin{tabular}{cccccc}
\toprule
Model &Llama3.3 70B & Gemini-1.5-pro & GPT-3.5-turbo &GPT-4-preview  &SRC    \\
\midrule
Accuracy&  42.6  & 37.1 & 34.3 &37.8 &\textbf{48.9} \\    
\bottomrule
\end{tabular}
}
\vspace{-1em}
\end{table}
We devise an evaluation framework to assess the performance of LLMs, benchmarking their powerful reasoning abilities over uncertain knowledge. To ensure fairness of the comparison, we consider the queries sampled from CN15k and we choose four candidate answers for each query. These queries have also been manually filtered and labeled to ensure clearness and correctness. We describe the syntax and semantics of soft queries using natural language, prompting LLMs to select the most suitable answer.  The details on this setting construction can refer to  Appendix~\ref{app:llm setting}.


The results, shown in Table~\ref{llm eval}, indicate that even the simple symbolic \method\ achieves significantly higher accuracy compared to Llama3.3 70B, Gemini-1.5-pro,  GPT-3.5-turbo, and  GPT-4-preview. This demonstrates that large language models (LLMs) struggle with complex arithmetic operations involving uncertain values of knowledge. Our evaluation is fair, as the required uncertain knowledge is derived from well-known commonsense KGs ConceptNet, and the logical operations are expressed in natural language. Nevertheless, even advanced commercial LLMs struggle to select the highest-scoring answer. This further emphasizes the difficulties presented by the proposed soft queries and highlights the ongoing need for the development of symbolic approaches.


\section{Conclusion}
In this paper, we introduce a novel setting, soft queries on uncertain KGs, which further extends complex logical queries on KG. The soft queries consider the incompleteness of large-scale uncertain KGs and require the incorporation of ML methods to estimate scores for new relational linking while handling semiring algebraic structures. Our proposed soft queries also propose the soft requirements inspired by soft constraint satisfaction problems to control the uncertainty of knowledge. To facilitate the research of soft queries, we construct a soft query answering dataset consisting of three uncertain KGs. Furthermore, we propose a new neural-symbolic approach with both forward inference and backward calibration. Both theoretical analysis and experimental results demonstrate that our method has satisfactory performance. 

\section{Limitation}
Soft queries extend complex logical queries over Knowledge Graphs (KGs) by incorporating soft requirements within uncertain KGs. However, the scope of the proposed soft queries is limited, as it primarily focuses on conjunctive queries. While conjunctive queries form the foundation of complex logical queries, this restriction may hinder the expressiveness and applicability of the proposed soft queries. Furthermore, the dataset does not include cyclic queries, which are NP-complete, even though their complexity can be more easily addressed.

\section{Potential Impact}
Soft queries have the potential to perpetuate existing biases present in the underlying knowledge graphs. If these graphs contain skewed or discriminatory information, the results generated by soft queries may reflect and amplify these biases, leading to unfair outcomes in applications such as hiring, credit scoring, or law enforcement. This raises significant ethical concerns about fairness, as marginalized groups may be disproportionately affected by biased query results, resulting in systemic inequality.

The utilization of soft queries to extract information from knowledge graphs can pose serious privacy risks. If queries access sensitive personal data without proper safeguards, there is a potential for unauthorized disclosures that violate individuals' privacy rights. This concern is heightened in contexts where the data might be used for profiling or surveillance, making it imperative to establish robust privacy protections and ethical guidelines to ensure that individuals' information is handled responsibly and transparently.
\bibliography{ref}

\newpage
\appendix

\onecolumn

\begin{center}
    \Huge Appendix
\end{center}

\section{Related Work}
\label{related work}
\subsection{Complex logical queries}

Answering complex logical queries over knowledge graphs is naturally extended from link prediction and aims to handle queries with complex conditions beyond simple link queries. This task gradually grows by extending the scope of complex logical queries, ranging from conjunctive queries~\cite{hamilton_embedding_2018} to Existential Positive First-Order (EPFO) queries~\cite{ren_query2box_2020}, Existential First-Order (EFO) queries~\cite{ren_beta_2020}, real Existential First-Order queries~\cite{yin2024rethinking}. The primary method is query embedding, which maps queries and entities to a low-dimensional space. The form of embedding has been well investigated, such as vectors~\cite{hamilton_embedding_2018,chen_fuzzy_2022,bai_query2particles_2022}, geometric regions~\cite{ren_query2box_2020,zhang_cone_2021}, and probabilistic distributions~\cite{ren_beta_2020,choudhary_probabilistic_2021,yang_gammae_2022,wang_wasserstein-fisher-rao_2023}. These methods not only explore knowledge graphs embedding but also leverage neural logical operators to generate the embedding of complex logical queries.

There are also neural-symbolic models to answer complex logical queries. Gradient optimization techniques were employed to estimate the embedding existential variables \cite{amayuelas_neural_2021, arakelyan_adapting_2023}.   Graph neural network \cite{zhu_neural-symbolic_2022} was adapted to execute relational projects and use logical operations over fuzzy sets to deal with more complex queries.  Efficient search algorithms based on link predictor over knowledge graphs were presented \cite{yin2024rethinking, bai_answering_2023}. While symbolic methods demonstrate good performance and offer interpretability for intermediate variables, they often struggle to scale with larger graphs due to their high computational complexity.

Many other models and datasets are proposed to enable answering queries with good performance and additional features, see the comprehensive survey~\cite{ren_neural_2023}. However, to the best of our knowledge, there is currently no existing query framework specifically designed for uncertain knowledge graphs.

\subsection{Uncertain knowledge graph embedding}

Uncertain knowledge graph embedding methods aim to map entities and relations into low-dimensional space, enabling the prediction of unknown link information along with confidence values. There are two primary research directions in this field.

The first line of research focuses on predicting the confidence score of uncertain relation facts. UKGE \citep{chen2019embedding} was the pioneering effort to model triple plausibility as the activated product of these embedding vectors. UKGE incorporates soft probabilistic logic rules to provide the plausibility of unseen facts. Building upon this, BEUrRE \citep{chen2021probabilistic} utilizes complex geometric boxes with probabilistic semantics to represent entities and achieve better performance.  Semi-supervised learning was applied  \cite{chen2021passleaf}to predict the associated confidence scores of positive and negative samples. And  Graph neural networks were used \cite{DBLP:journals/sigkdd/BrockmannKB22, liang2023gnn} to represent and predict uncertain knowledge graphs.

The other line of research aims to address link prediction on uncertain knowledge graphs by fitting the likelihood of uncertain facts. To adjust the similar task, FocusE~\cite{pai2021learning} was introduced, an additional layer to the knowledge graph embedding architecture. They provide variants of classical embedding methods such as TransE~\citep{bordes_translating_2013}, DistMult~\citep{DBLP:journals/corr/YangYHGD14a}, and ComplEx~\citep{trouillon_complex_2016}.


\section{Ablation study: The impact of soft requirements}\label{app:exp-soft-requirements}
The two parameters,  $\alpha$ and $\beta$ play a crucial role in controlling soft constraints, thus we construct 
settings with varying values. Specifically, we select ``zero''(Z) and ``random''(R) for $\alpha$, and ``equal''(E) and ``random''(R) for $\beta$, as explained in Section~\ref{sec:soft_requirements}. Moreover, we sample 12 query types from O*NET20k KG. The detailed construction is in Appendix~\ref{app:varing ab setting}. For ConE, we train it on each setting and test it across all settings. As for \method, we directly test it on all settings.
\begin{table}
\centering
\caption{The mean NDCG of varing $\alpha$ and $\beta$.
}
\label{varying setting}

\begin{tabular}{ccccccc}
\toprule
\multirow{2}{*}{Model} & \multirow{2}{*}{Train} & \multicolumn{4}{c}{Test} &\multirow{2}{*}{AVG.}    \\
                       &                        & Z+E & Z+R & N+E & N+R & \\
\midrule
\multirow{4}{*}{ConE+NE} &Z+E & \textbf{39.3}      &35.6    & 31.9     &31.5   & 34.6     \\
& Z+R  & 39.0      &\textbf{42.2}  &27.8   &28.7  & 34.4      \\

 &N+E  & 26.6     &23.6     & 46.3     &44.1  & 35.2    \\

 &N+R  & 14.0     &7.5    & 43.5     &42.2 &   26.8   \\
\midrule
SRC&-& 34.3 &37.2     & \textbf{46.7}      & \textbf{45.8}   & \textbf{41.0}      \\
\bottomrule
\end{tabular}

\vspace{-1em}
\end{table}

The results in Table~\ref{varying setting} demonstrate that \method~outperforms ConE trained on four different settings in terms of average scores. In the ``Z+E'' and ``Z+R'' settings, although ConE achieves higher scores when trained and tested within the same setting, its performance considerably declines when generalizing to other settings. Our method \method~exhibits consistent performance across various settings due to its strong generalization by theoretical foundations in Section~\ref{sec:error-analysis}.

\section{Qualitative analysis} 
\label{qualitative analysis}
We select the following query for the LLM to conduct a qualitative analysis:
``
Soft query: (bread, is related to, $f_1$, 0.3, 0.9) $\land$ (rice, is related to, $f_1$, 0.7, 0.3).
Four candidate entities: basic, wheat, white, and food.
''
Among the four candidates, basic and wheat are weakly related to the two soft constraints. While white and food exhibit higher relevance, the constraints related to bread carry greater importance. Consequently, the final answer is bread. However, some LLMs may often select rice due to a tendency to hallucinate the concept of white bread. Such errors highlight the challenges in accurately interpreting soft constraints and the need for more robust mechanisms to handle nuanced queries.
\section{Logical queries on knowledge graphs} 
\label{logical queries}

\begin{definition}[Knowledge graphs]
Let $\entities$ be the set of entities and $\relations$ be the set of relations. A knowledge graph is a set of triples $\kg = \{(s_i, r_i, o_i)\}$, where $s_i,o_i\in \entities$ are entities and $r_i\in \entities$ is relation.
\end{definition}

The fundamental challenge of knowledge graphs lies in dealing with the Open World Assumption (OWA). Unlike the Closed World Assumption (CWA), which considers only observed triples as facts, OWA acknowledges that unobserved triples may also be valid.

The study of logical queries on KG considers the Existential First-Order (EFO) queries, usually with one free variable~\cite{ren_beta_2020,wang_benchmarking_2021,yin2024rethinking}.
\begin{definition}[Syntax of existential first-order queries]\label{def:efo-query}
The disjunctive normal form of an existential first-order query $\Gamma$ is:
\begin{align}\label{eq:efo-query-dnf}
    \Gamma(y) = \gamma_1 (y) \lor \cdots \lor \gamma_q (y),
\end{align}
where $y$ is the variable to be answered. Each $\gamma_i(y)$ is a conjunctive query that is expanded as
\begin{align}
    \gamma_i(y) = \exists x_1, \dots, x_{n}. a_{i1} \land \cdots \land a_{im_i}, i=1,...,q,
\end{align}
where $x_1, \dots, x_{n}$ are existentially quantified variables, each $a_{ij} = r(h, t)$ or $a_{ij} = \lnot r(h, t), j= 1,..., m_i$ is an atomic query, $r$ is the relation, $h$ and $t$ are either an entity in $\entities$ or a variable in $\{y, x_1, ..., x_n\}$.
\end{definition}
\begin{definition}
    $\Gamma(s/y)$ denotes the substitution of the entity $s$ for the variable $y$.
\end{definition}
When all free variables are substituted, \textbf{a query} $\Gamma(y)$ is transformed into \textbf{a sentence} $\Gamma(s/y)$. Given $\kg$, answering a query $\Gamma(y)$ means finding all substitutions, such that the sentence $\Gamma(s/y)$ is entailed by $\kg$, i.e., $\kg \models \Gamma(s/y)$. The answer set is defined as
\begin{definition}[The Answer Set of first-order 1uery]\label{def:efo1-query}
    The answer set of an first-order query is defined by
    \begin{align}
        \mathcal{A}[\Phi(y)] = \{a\in \entities |\text{ }\Phi(a) \text{ is True}\}
    \end{align}
\end{definition}

The answers in the answer set derived from $\kg_{\rm train}$ is easy answers. hard answers are the answers in the set difference between the answers from $\kg_{\text{valid}}$ and $\kg_{\text{train}}$~\cite{wang_benchmarking_2021,ren_neural_2023}. The traditional graph-matching methods can not find the answers introduced by new facts~\citep{riesen2010exact}. Thus, we should develop new methods.

\subsection{ML-based method for logical queries on KG}
Recent works are dedicated to introducing ML methods, i.e., knowledge graph embeddings, to generalize from $\kg$ to $\widehat{\kg}$, so that it approximates $\kg_{\rm test}$.

\noindent\textbf{Query Embeddings (QE).} Query embedding methods generally map the query $\Gamma$ as an embedding~\citep{ren_beta_2020,liu_neural-answering_2021,wang_logical_2023}. One dominant way is to ``translate'' the procedure of solving logical formulas in Equation~\eqref{eq:efo-query-dnf} into the set operations, such as set projection, intersection, union, and complement. Then, the neural networks are designed to model such set operations in the embedding space. We can see that direct modeling of set operations is incompatible with the concept of necessity and importance introduced in Section~\ref{sec:formulation}.

\noindent\textbf{Inference methods.} Other methods solve the open world query answering methods in a two-step approach~\citep{arakelyan_complex_2021,bai_answering_2023,yin2024rethinking}. In the first step, the pre-trained knowledge graph embedding estimates the $\kg_{\rm test}$. Then, the answers are derived by the fuzzy logic inference or optimization. These methods rely on the standard logic calculation and cannot be directly applied to our SQUK setting introduced in Section~\ref{sec:formulation}.




\section{Connection with Open World Assumption}\label{app:connection with owa}

Evaluating queries over deductive question-answering systems generally follows either the closed-world assumption (CWA) or the open-world assumption (OWA)~\cite{reiter1981closed}. Under CWA, only known facts are considered true, whereas OWA assumes that the absence of knowledge does not imply falsity~\cite{reiter1986sound}. Since knowledge graphs (KGs) are often incomplete, knowledge graph completion (KGC) has been proposed to address this challenge.
In KGC, the observed knowledge in KGs is divided into three nested subsets: $\kg_{\text{train}} \subset \kg_{\text{valid}} \subset \kg_{\text{test}}$. The KGC model is trained on the $G_{\text{train}}$ subset and evaluated on the $G_{\text{valid}}$ subset to assess its performance.
For complex logical query answering, the evaluation considers "hard answers," defined as the set difference between the answers from $G_{\text{valid}}$ and $G_{\text{train}}$~\cite{wang_benchmarking_2021,ren_neural_2023}. This approach assesses the model's ability to handle incomplete knowledge and make inferences beyond the observed facts in the training set. The evaluation of complex logical query answering extends beyond CWA but does not fully align with OWA. The same applies to the evaluation of soft query answering.
Evaluating under the Open-World Assumption represents a promising avenue for future work~\cite{yang2022rethinking}.

\section{Constraint Satisfaction Problem and Soft Constraint Satisfaction Problems}\label{app:semiring-CSP}

\textbf{Constraint Satisfaction Problems (CSP)} is a mathematical question defined as a set of objects whose state must satisfy several constraints or limitations. Each instance of it can be represented as a triple $(Z, D, C)$, where $Z=(z_1, \cdots, z_n)$ is a finite tuple of $n$ variables, $D=(D_1, \cdots, D_n)$ is the tuple of the domains corresponding to variables in $Z$, and $C=\{(C^1_1, C^2_1), \cdots, (C^1_t, C^2_t)\}$ is the finite set of $t$ constraints.  $D_i$ is the domain of $z_i$,  and $C^1_i \subset Z$ is the scope of the i-th constraint and $C^2_i$ specifies how the assignments allowed by this constraint. In general definitions, the constraints in classical Constraint Satisfaction Problems are hard, meaning that none of them can be violated.

Many problems can be viewed as CSP, which include workforce scheduling and the toy 8-queens problem. Conjunctive queries are a special case to be reduced as CSP under open-world assumptions if we set the constant variable’s domain as itself, set the domain of the existential variable and free variable as the entity set $\mathcal{E}$ of knowledge graphs, and treat atomic formula or its negation as binary constraint by knowledge graph. 
 
\textbf{Soft Constraint Satisfaction Problems}

\begin{table}[t]
\centering
\caption{Different specific soft CSP frameworks modeled as c-semirings.}
\label{tab: NELL EFO1 result}
\begin{tabular}{cccc}
\toprule
Semiring      & $E$         & $\times_s$ & $+_s$\\
\midrule
Classical     & $\{T,F \} $ & $\land$  & $\lor$ \\
Fuzzy         & $[0,1]$ &  $min$ & $max$ \\
k-weighted    &   $\{0, \dots, k \}$   & $+$ & $min$ \\
Probabilistic &   $[0,1]$  &  $xy$        &  $max$ \\
Valued        & $E$  & $\oplus$ &    $min_v$  \\
\bottomrule
\end{tabular}
\end{table}

Though CSP is a very powerful formulation, it fails when real-life problems need to describe the preference of constraint. It usually returns null answers for problems with many constraints, which are called over-constraints. To tackle the above weakness, many versions of Soft Constraint Satisfaction Problems (SCSP) are developed, such as fuzzy CSP, weighted CSP, and probabilistic CSP, which all follow a common semiring structure, where two semiring operations $\times_s, +_s$ are utilized to model constraint projection and combination respectively. Based on this theoretical background, we propose soft queries based on SCSP. The proposed soft queries will have the advantages of SCSPs and can handle the numeric facts representing uncertainty.
\section{Details of Implementation}

Our experiments are run on the Nvidia V100-32G.
\label{app: details implement}
\subsection{adaptive scoring }
Let $E_s$, $E_r$, and $E_t$ be the embedding vectors of entity $s$, relation $r$, and entity $t$ respectively. We parameterize the adaptive scoring calibration using the following learnable affine transformations:
\begin{align}
    \rho_{\theta}(s, r, o) &= W^1_1 E_s + b^1_1 + W^1_2 E_r + b^1_2 + W^1_3 E_t + b^1_3, \\
    \lambda_{\theta}(s, r, o) &= W^2_1 E_s + b^2_1 + W^2_2 E_r + b^2_2 + W^2_3 E_t + b^2_3,
\end{align}
where $\{W^i_j, b^i_j\}$ for $1 \le i \le 2$ and $1 \le j \le 3$ are the learnable parameters.
\subsection{Uncertain knowledge graph embedding}
We reproduce previous results~\citep{chen2019embedding,chen2021probabilistic} and use the same embedding dimension. We search the other parameters including the learning rate from $\{1e-3, 5e-4, 1e-4 \}$ and regularization term $\lambda$ from $\{0.1, 0.01, 0.05 \}$.
\subsection{Query embedding with number embedding}

We follow the same hyperparameter of origin paper~\citep{luus_logic_2021,zhang_cone_2021} but search the learning rate and margin. The embedding dimension of number embedding is the same as the dimension of entity embedding.
\subsection{Two strategies of calibration}
For learning strategy, we search learning rate from $\{5e-4, 5e-5, 1e-5 \}$. For Debiasing strategy, we search $\epsilon$ from  $\{0.05, 0.1, 0.15 \}$.

\section{Details of Soft Reasoning with Calibrated Confidence Values}
\label{details methods}
\begin{figure}[t]
  \centering
  \includegraphics[width=.95\textwidth]{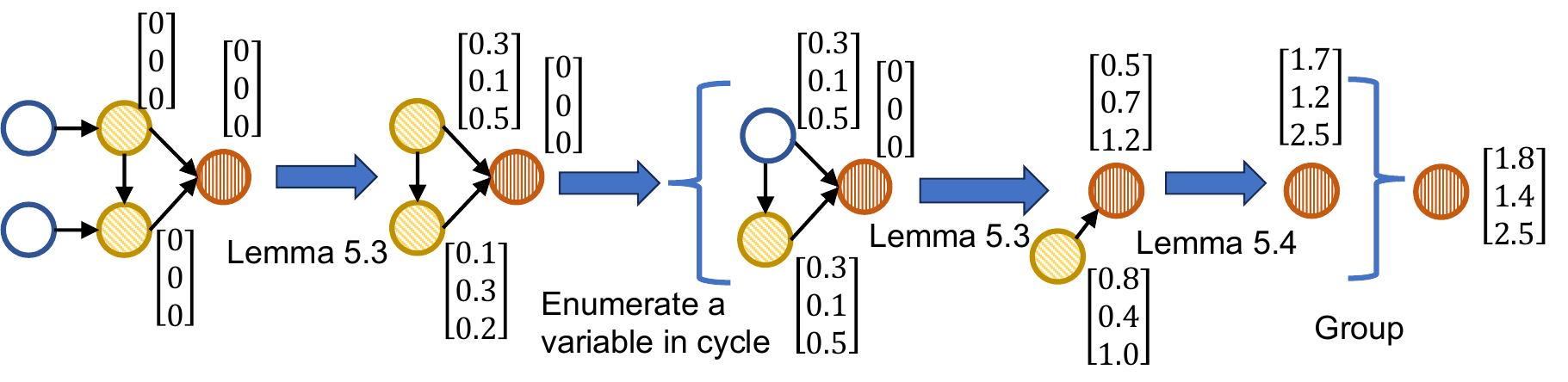}
  \caption{A toy model to present the process of \method~ algorithm.}
  \label{fig: algorithm}
\end{figure}
\subsection{Uncertain value definition}
By indexing all entities and relations, we represent $s$, $r$, and $o$ as integers. Confidence score prediction over uncertain knowledge graphs can be conceptualized by the neural link predictor as $|\mathcal{R}|$ relation matrices $\ukge_r \in \mathbb{R}^{|\mathcal{E}| \times |\mathcal{E}|}$, where $\ukge_r(s, o)$ is the predicted score of fact triple $(s,r,o)$ and $n$ is the number of entities. We denote $U_r(s)$ as a vector formed by the elements of the $s$-th row. The symbol $+$ also denotes element-wise plus operation when used in vector-vector or matrix-matrix operations.
We also define two new plus operations in matrix-vector operations.

\begin{definition}
    Given a matrix $\mathbf{M_1} \in \mathbb{R}^{|\mathcal{E}| \times |\mathcal{E}|}$ and a vector $\mathbf{b} \in \mathbb{R}^{|\mathcal{E}|}$, We define two new addition operations: column-wise addition and row-wise addition as following:
    \begin{align}
        \mathbf{M_2} = \mathbf{M_1} +_r \mathbf{b}, \mathbf{M_2}(s,o) = \mathbf{M_1}(s, o) + \mathbf{b}(s), \\
        \mathbf{M_2} = \mathbf{M_1} +_c \mathbf{b}, \mathbf{M_2}(s,o) = \mathbf{M_1}(s, o) + \mathbf{b}(o).
    \end{align}
\end{definition}
\begin{definition}[Membership function]
    Given a soft query and a variable $x$, $\mu(x, C_x)$ is a membership function to check the current confidence value, where $\hat U(\mu(s/x, C_x)) = C_x(s)$.
\end{definition}

\subsection{Details of the inference of $\text{\method}$}
Since the effect $\alpha$ and $\beta$ are equivalent to modifying the uncertain values, we will focus on explaining how \ the method works in the basic scenario where $\alpha=0.0$ and $\beta=1.0$. Our goal is to infer the utility vector $\hat U[\phi(y)]$ by estimating the confidence value $\hat U[\phi](o) = [\ukge \models_s \phi(o/y)]$, for all $o \in \mathcal{E}$.  In the following content, we will cut the query graph step by step while recording any lost information on the remaining nodes. After removing nodes from the soft query $\phi$, we denote the remaining sub-query graph as $\psi$.

\subsubsection{Step 1. Initialization.\label{step1}} We initialize each variable $x$ as a candidate state vector $C$ with all zero elements, denoted as $C = \mathbf{0} \in \mathbb{R}^{|\mathcal{E}|}$ This vector records the candidates and their corresponding values. Throughout the algorithmic process, the vectors $C$ are updated iteratively, ultimately yielding the final answers represented by the resulting vector $C_y$. 


\subsubsection{Step 2. Remove constant nodes.}\label{step2} For constant nodes in a query graph, we can easily remove their whole edges and update the information to connected nodes by the following lemma. 

\begin{lemma}\label{remove_constant_nodes}
For the constant nodes in a soft query, there exists an $O(|\mathcal{E}|)$ transformation $T_c$ to remove them. 
\end{lemma}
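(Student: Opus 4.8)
The plan is to give a constructive proof of Lemma~\ref{remove_constant_nodes} by exhibiting the transformation $T_c$ explicitly and verifying that it preserves the utility vector while running in $O(|\mathcal{E}|)$ time per constant node. The key observation is that a constant node $c$ (an entity $s_c\in\entities$) contributes to the utility only through the soft atomic formulas on the edges incident to it, and because $c$ is fixed, each such contribution is a \emph{known vector} indexed by the entity substituted into the variable at the other endpoint. So the first step is to fix a constant node $c$ and consider a single incident edge $(h,r,t,\alpha,\beta,\textsc{Neg})$ where, say, $t=c$ is the constant and $h=z$ is a variable node (the case $h=c$ is symmetric via the reverse relation matrix). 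For each candidate $s\in\entities$ substituted for $z$, the semantic rules of Definition~(Semantic of soft queries) give the edge's contribution as $\beta[\ukge_r(s,s_c)]_\alpha$ (or $\beta[1-\ukge_r(s,s_c)]_\alpha$ if negated), which is just a function of $s$. Reading off the column $\ukge_r(\cdot,s_c)$ of the relation matrix yields a vector $\mathbf{v}\in\mathbb{R}^{|\entities|}$ in $O(|\entities|)$ time, after which applying the threshold $[\cdot]_\alpha$ elementwise and scaling by $\beta$ is also $O(|\entities|)$.

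Next I would fold this vector into the state vector of the surviving variable node $z$. Since conjunction is interpreted by $\otimes=+$ (the semiring product in the instantiation $(\otimes,\oplus,\ozero)=(+,\max,-\infty)$), the contribution of the removed edge combines additively with whatever is already recorded in $C_z$, so the update is $C_z' \gets C_z + \mathbf{v}$, again $O(|\entities|)$. I would argue correctness by invoking the semantic definition directly: the utility of any full substitution factors, over a conjunction, into a sum over edges, and the edge to the constant $c$ depends only on the value assigned to $z$; hence recording its per-entity contribution in $C_z$ and deleting the edge leaves the overall utility vector $\hat{\mathbf{u}}$ unchanged. This is precisely the invariant that the equivalence transformation $T$ must maintain, so $T_c$ qualifies as a valid equivalent transformation in the sense defined before Lemma~\ref{Remove_constant_nodes}. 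Removing \emph{all} edges incident to $c$ this way, the node $c$ becomes isolated and can be dropped; if $c$ touches $\deg(c)$ edges the total cost is $O(\deg(c)\cdot|\entities|)$, which is $O(|\entities|)$ per edge and hence matches the stated bound for the per-node operation.

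The main obstacle, and the place I would spend the most care, is the orientation bookkeeping and the degenerate shapes of the incident edges. Concretely: (i) when the constant sits at the head rather than the tail, I must use the row $\ukge_r(s_c,\cdot)$ instead of the column, and I should confirm the membership-function convention $\hat U(\mu(s/x,C_x))=C_x(s)$ is respected in both orientations; (ii) an edge connecting the constant $c$ to the \emph{free} variable $y$ must update $C_y$ rather than an existential node's state, but the additive update is identical, so this is a notational rather than a substantive case; and (iii) I should note that because the query graph is acyclic and simple (the assumption stated for the main paper), distinct edges incident to $c$ go to distinct neighbors, so the additive updates to the various $C_z$ do not interfere and can be performed independently. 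The semiring choice makes the algebra clean—there is no thresholding interaction across edges because $[\cdot]_\alpha$ is applied \emph{before} the additive combination—so the only real work is confirming that the elementwise threshold-and-scale, read off a single row or column of a precomputed relation matrix, faithfully realizes $U(\psi(s/y),\ukge)$ for the deleted atom. Once that is checked in one orientation, the rest follows by symmetry, and the $O(|\entities|)$ complexity is immediate from the vector operations involved.
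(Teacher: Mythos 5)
Your proposal is correct and takes essentially the same approach as the paper's: both fold the per-entity contribution $\beta[\ukge_r(\cdot,s_c)]_\alpha$ of each edge incident to the constant node additively into the neighboring variable's state vector $C_z$ (exploiting $\otimes=+$ so the edge's contribution commutes with the outer $\max$), then delete the edge, at $O(|\entities|)$ cost per edge. The only cosmetic differences are that the paper reduces to the case $\alpha=0,\beta=1$ up front and works a representative multi-edge configuration, whereas you carry the threshold and weight explicitly and treat one edge at a time.
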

\begin{proof}
Without loss of generality, we consider the situation that a constant node with entity $e$ connects an existential variable $x$ that there is only one positive edge from $e$ to $x$, one negative edge from $e$ to $x$, and one positive edge from $x$ to $e$. To simplify, we also denote $e$ as the related grounded entity.
\begin{align}
    \hat U [\phi](o) &= \hat U(\exists x.\mu(x, C_x)  \wedge r_1(e, x) \wedge \neg r_2(e, x) \wedge r_3(x, e) \wedge \psi(o/y)) \notag \\ 
                  &= \max_{s \in \mathcal{E} } [C_x(s) + \ukge_{r_1}(e, s) + (1 - \ukge_{r_2})(e, s) \\ & +   \ukge_{r_3}^T(e, s) + \hat U(\psi(o/y;s/x))]  \notag\\ 
                  &= \max_{s \in \mathcal{E}} [C_x'(s) + \hat U(\psi(o/y; s/x))], \notag
\end{align}
where $C_x' = C_x +  \ukge_{r_1}(e) +  (\mathbf{1}-\ukge_{r_2})(e) +  \ukge_{r_3}^T(e)$ is updated candidate vector and $\mathbf{1} \in \mathbb{R}^{|\mathcal{E}| \times |\mathcal{E}|}$ is all one matrix. The situation where a constant node connects a free variable node is similar and even easier to handle.
\end{proof}
In the above derivation, we retain the value of answers by updating the candidate state vector of an existential variable. 

\subsubsection{Step 3. Remove self-loop edges.\label{step3}}  

\begin{lemma}\label{remove_self-loop_edges}
For a soft query having self-loop edges, there exists an $O(|\mathcal{E}|)$ transformation $T_c$ to remove self-loop edges.
\end{lemma}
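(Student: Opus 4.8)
The plan is to mirror the argument of Lemma~\ref{remove_constant_nodes}, exploiting that a self-loop is entirely local to a single variable node and hence contributes only the diagonal of a relation matrix. As elsewhere in the inference, I would reduce to the canonical scenario $\alpha = 0$ and $\beta = 1$, since the necessity threshold and importance weight merely reshape the entries of each relation matrix and can be absorbed into $\ukge_r$ beforehand.

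First I would fix notation. A self-loop edge on a variable node $z \in \{y, x_1, \dots, x_n\}$ is induced by a soft atomic formula $(z, r, z, \alpha, \beta)$ or its negation, i.e.\ an edge whose head and tail coincide. The key observation is that once $z$ is grounded to an entity $s$, such an edge evaluates to the diagonal entry $\ukge_r(s, s)$ (or $1 - \ukge_r(s, s)$ when negated), which depends on $s$ alone. A self-loop therefore never couples $z$ to the rest of the graph; it only re-weights each candidate value of $z$.

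The main step is the state-vector update. Writing $\psi$ for the sub-query obtained after deleting the self-loop, the recursive definition of $\hat U$ together with the semiring $(\otimes, \oplus) = (+, \max)$ gives, for an existential $z = x$,
\begin{align}
    \hat U [\phi](o) &= \hat U(\exists x.\, \mu(x, C_x) \wedge r(x, x) \wedge \psi(o/y)) \notag \\
                    &= \max_{s \in \mathcal{E}} [C_x(s) + \ukge_r(s, s) + \hat U(\psi(o/y; s/x))] \notag \\
                    &= \max_{s \in \mathcal{E}} [C_x'(s) + \hat U(\psi(o/y; s/x))],
\end{align}
where $C_x'(s) = C_x(s) + \ukge_r(s, s)$; a negated self-loop replaces this diagonal by $1 - \ukge_r(s, s)$, and multiple self-loops on the same node are handled by summing their diagonal contributions. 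A self-loop on the free variable $y$ is the same update applied to $C_y$ with no enclosing maximisation, hence even easier. Since the update reads only the $|\mathcal{E}|$ diagonal entries of $\ukge_r$ and adds them entrywise to $C_z$, both the extraction and the addition cost $O(|\mathcal{E}|)$, yielding the claimed complexity; after the update the self-loop edge may be deleted with the utility vector preserved, so the transformation is equivalent.

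I expect the only delicate point to be bookkeeping rather than analysis: checking that the diagonal update composes correctly with the $\alpha$-thresholding and $\beta$-scaling once these are reintroduced, and confirming that deleting a self-loop cannot disconnect $z$ from the remaining query graph (it cannot, since a self-loop contributes no edge between $z$ and any other node, and so does not affect whether $z$ is a leaf). With these verified, both the equivalence and the $O(|\mathcal{E}|)$ bound are immediate.
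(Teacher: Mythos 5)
Your proposal is correct and follows essentially the same route as the paper's own proof: both ground the self-loop variable, observe that the loop contributes only the diagonal entry $\ukge_r(s,s)$, and fold it into the state vector via $C_x' = C_x + \mathrm{diag}(\ukge_r)$ under the $(+,\max)$ semiring, giving the $O(|\mathcal{E}|)$ bound. Your additional remarks on negated loops, multiple loops, and the free-variable case are consistent with what the paper dismisses as ``without loss of generality.''
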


\begin{proof}
Without loss of generality, we consider the situation in which an existential variable $x$ contains one positive loop.
\begin{align}
    \hat U[\phi](o) &= \hat U(\exists x.\mu(x, C_x)  \wedge r(x, x) \wedge \psi(o/y; x)) \notag \\ 
                  &= \max_{s \in \mathcal{E} } [C_x(s) + \ukge_r(s,s) + \hat U(\psi(o/y;s/x))]  \notag\\ 
                  &= \max_{s \in \mathcal{E}} [C_x'(s) + \hat U(\psi(o/y; s/x))], \notag
\end{align}
where $C_x' = C_x +  diag(\ukge_{r})$ and $diag(\ukge_r)$ is a vector formed by the diagonal elements of matrix $\ukge_r$.
\end{proof}

Previous research~\citep{yin2024rethinking} has demonstrated the difficulty of sampling high-quality self-loop queries due to the rarity of self-loop relations in real-life knowledge graphs. Similarly, in our specific uncertain knowledge graph, it is challenging to sample meaningful queries.

\subsubsection{Step 4. Remove leaf nodes.}\label{step4} The leaf node $u$, which is only connected to one other node $v$ in the soft query graph.  After removing the constant nodes, if the query graph contains no circles, we can get the utility vector by removing the leaf nodes step by step. Next, we present how to handle leaf nodes by the three lemmas.

\begin{lemma}\label{remove_leaf_nodes}
If the leaf node $u$ is an existential variable $x$ and $v$ is the free variable $y$, there exists an $O(|\mathcal{E}|^2)$ transformation $T_l$ to shrink its graph. 
\end{lemma}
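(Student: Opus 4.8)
The plan is to mirror the constant-node removal argument of Lemma~\ref{remove_constant_nodes}: unfold the recursive semantics at the leaf, isolate the sub-expression that depends on the existential variable $x$, and show that it collapses into a purely additive update of the free variable's state vector $C_y$, computable in $O(|\mathcal{E}|^2)$. Since $x$ is a leaf whose unique neighbour is $y$, every soft atomic formula containing $x$ also contains $y$ (and no other node), so all edges to be eliminated run between $x$ and $y$, possibly positive or negated and in either direction. Without loss of generality I would first treat one representative positive edge $r(y,x)$ and then note that multiple edges simply contribute additively to a single matrix, exactly as in the constant-node case.

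First I would substitute an entity $o$ for the free variable $y$ and expand the semantics, using $\oplus=\max$ for the existential quantifier over $x$ and $\otimes=+$ for the conjunction. Because $x$ does not occur in the remaining subquery $\psi$, the factor $\hat{U}(\psi(o/y))$ is independent of the summation variable $s$ and can be pulled outside the maximum:
\begin{align}
\hat{U}[\phi](o) &= \hat{U}(\exists x.\, \mu(x,C_x)\wedge r(y,x)\wedge \psi(o/y)) \notag\\
&= \max_{s\in\mathcal{E}}\big[\,C_x(s) + M(o,s)\,\big] + \hat{U}(\psi(o/y)),
\end{align}
where $M(o,s)$ gathers the confidence contributions of the edges between $y=o$ and $x=s$ (for instance $\ukge_r(o,s)$ for a positive edge $r(y,x)$, $(\mathbf{1}-\ukge_{r'})(o,s)$ for a negated one, and $\ukge_{r''}^{T}(o,s)$ for a reversed edge $r''(x,y)$). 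Setting $b(o):=\max_{s}[\,C_x(s)+M(o,s)\,]$, I would delete $x$ with its incident edges and update $C_y \gets C_y + b$. Since $\hat{U}[\phi](o) = b(o) + \hat{U}[\psi](o)$ for every $o$, the transformation preserves the utility vector, so $T_l$ is indeed an equivalent transformation. The cost is dominated by the max-plus product $b$: for each of the $|\mathcal{E}|$ values of $o$ we take a maximum over $|\mathcal{E}|$ values of $s$, and assembling $M$ from a constant number of relation matrices is likewise $O(|\mathcal{E}|^2)$, giving the claimed bound.

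The main obstacle is the factorization step, namely justifying that $\hat{U}[\psi]$ is independent of $s$ and may leave the maximum. This is precisely where the leaf hypothesis is essential: if $x$ were adjacent to any node other than $y$, the subquery would couple to $s$ and the clean separation would fail. A secondary point requiring care is that, unlike the existential-to-existential reduction, here $y$ is \emph{free} and therefore unquantified, so the eliminated contribution remains a vector $b$ over entities that is folded into $C_y$ rather than collapsing to a scalar; I would verify that this is the correct update target and that the bookkeeping over the several possible edge types aggregates correctly into the single matrix $M$. The remaining verifications (edge-direction transposes, handling of negation via $\mathbf{1}-\ukge_{r'}$, and the complexity accounting) are routine once this separation is in place.
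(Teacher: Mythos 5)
Your proposal is correct and follows essentially the same route as the paper: expand the semantics with $\oplus=\max$ and $\otimes=+$, use the leaf hypothesis to pull the $s$-independent factor $\hat{U}(\psi(o/y))$ out of the maximum, and absorb $\max_{s}[C_x(s)+\ukge_r(s,o)+\cdots]$ into an updated state vector for $y$ at cost $O(|\mathcal{E}|^2)$. The only cosmetic difference is that the paper folds $C_y(o)$ into the matrix before taking the maximum (via the $+_r$ and $+_c$ operations) while you add $b$ to $C_y$ afterwards; these are equivalent since $C_y(o)$ does not depend on $s$.
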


\begin{align} 
    \hat U[\phi](o) &= \hat U(\exists x.\mu(x, C_x)  \wedge r(x, o) \wedge \mu(o, C_y) \wedge \psi(o/y)) \notag \\ 
                  &= \max_{s \in \mathcal{E}}[C_x(s) + \ukge_{r}(s, o) + C_y(o) + \hat U(\psi(o/y))]  \notag \\
                  &= \max_{s \in \mathcal{E}} [C_x(s) +  \ukge_{r}(s, o) + C_y(o)] + \hat U(\psi(o/y))  \notag \\
                  &= C_y'(o) + \hat U(\psi(o/y)), \notag
\end{align}
where $C_y'(o) = \max_{s \in \mathcal{E}} M_1(s,o)$ and $M_1 = [(\hat P_r +_c C_y) +_r C_x]$.
\begin{lemma}
If the leaf node $u$ is a free variable $y$ and $v$ is the existential variable $x$, we can first solve the subgraph obtained by removing $u$, and then remove $v$.
\end{lemma}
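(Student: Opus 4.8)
The plan is to reduce this dual orientation to the previously established leaf-removal lemma by exploiting that the free variable $y$, being a leaf, is attached to the rest of the query through a single edge to its unique neighbor $x$. Because $y$ touches nothing else, the whole query factors as $\phi = \exists x.\, \mu(x,C_x) \wedge r(x,y) \wedge \mu(y,C_y) \wedge \psi(x)$, where $\psi(x)$ collects every soft atom of $G_\phi$ not incident to $y$ (so $\psi$ involves $x$ and the other existential variables only). First I would write out the semantics under the instantiated semiring $(\otimes,\oplus)=(+,\max)$, substituting $o$ for $y$:
$$\hat U[\phi](o) = \max_{s\in\entities}\big[C_x(s) + \ukge_r(s,o) + C_y(o) + \hat U(\psi(s/x))\big].$$

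The crucial algebraic observation is a separation of variables: the membership term $C_y(o)$ does not depend on the bound variable $s$, so it can be pulled outside the maximization, and the subgraph contribution $\hat U(\psi(s/x))$ depends only on $s$. Collecting the $s$-only terms into an updated state vector $\tilde C_x(s) := C_x(s) + \hat U(\psi(s/x))$ yields
$$\hat U[\phi](o) = C_y(o) + \max_{s\in\entities}\big[\tilde C_x(s) + \ukge_r(s,o)\big],$$
which is exactly the expression the earlier leaf-removal Lemma~\ref{Remove_leaf_nodes} handles when the leaf is an existential variable $x$ adjacent to the free variable $y$, now with the state vector on $x$ replaced by $\tilde C_x$. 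Hence, once $\tilde C_x$ is in hand, the final step is a verbatim application of that lemma (together with its negated and reversed-orientation variants), which removes $x$ and returns the utility vector for $y$.

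It remains to argue that $\tilde C_x$ is precisely what \textit{solving the subgraph obtained by removing} $u=y$ computes. Deleting $y$ together with its incident edge leaves the subgraph $G_\psi$, which is again acyclic (removing a leaf and its edge from an acyclic graph preserves acyclicity), so Algorithm~\ref{alg:srcc} applies to it with $x$ designated as the free variable. Its correctness — established by induction on the number of edges via Lemma~\ref{Remove_constant_nodes} and Lemma~\ref{Remove_leaf_nodes} — guarantees that the returned state vector at $x$ equals $C_x(s) + \hat U(\psi(s/x)) = \tilde C_x(s)$ for every $s\in\entities$. I expect this recursive well-foundedness to be the main obstacle: one must confirm that the recursion terminates (each call strictly decreases the edge count), that no spurious dependence on $y$ survives once its edge is deleted, and that the separation-of-variables step remains legitimate for all edge orientations and for negated atoms, where $\ukge_r(s,o)$ is replaced by $(\mathbf{1}-\ukge_r)(s,o)$ while the thresholding and importance factors ride along unchanged. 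These variants are routine once the positive forward-edge case above is settled.
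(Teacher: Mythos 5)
Your proof is correct and follows essentially the same route as the paper's: both factor the query as $C_y(o)$ plus a subquery rooted at $x$, pull the $s$-independent term $C_y(o)$ outside the maximization, observe that $\hat U(\psi(s/x))$ is exactly what solving the subgraph with $x$ as the designated free variable produces, and then finish by the existing existential-leaf removal step $\max_{s}[\tilde C_x(s)+\ukge_r(s,o)]$. Your additional remarks on termination, acyclicity preservation, and the negated/reversed-edge variants are sensible elaborations of details the paper leaves implicit, but they do not change the argument.
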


\begin{proof}
Denote $\psi$ is the subgraph obtained by removing $u$, we treat $y$ as the free variable to get its utility vector $\hat U(\psi(y))$. Then we can remove $u$ by the following.
\begin{align}
    \hat U[\phi](o) &= \hat U(\mu(o, C_y)  \wedge [ \exists x. r(x, o) \wedge \psi(x)]) \notag \\ 
                  &= C_y(o) +  \hat U( \exists x. r(x, o) \wedge \psi(x) )  \notag \\
                  &= C_y(o) +  \max_{s \in \mathcal{E}} [ \ukge_{r}(s, o) + \hat U(\psi(o))]  \notag \\
                  &= C_y(o) +  \max_{s \in \mathcal{E}} M_2(s,o),
\end{align}
where $M_2 = \ukge_r +_c \hat U[\psi]$. 
\end{proof}

\begin{lemma}
If the leaf node $u$ and its connected node $v$ both are the existential variable, we can remove the leaf node u when the existential quantifier is maximization.
\end{lemma}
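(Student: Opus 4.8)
The plan is to mirror the earlier leaf-removal lemmas, exploiting that under the semiring $(\oplus,\otimes)=(\max,+)$ the existential quantifier is a maximization and that $+$ distributes over $\max$. Following the convention used above, I would first reduce to the base case $\alpha=0,\beta=1$ (the soft requirements only amount to modifying the confidence values) and, without loss of generality, take the leaf node $u=x_1$ to be joined to its unique neighbour $v=x_2$ by a single positive edge $r(x_1,x_2)$; the remaining subgraph $\psi$ involves $x_2$ and the free variable $y$ but not $x_1$. Expanding the semantics gives
\begin{align}
    \hat U[\phi](o) &= \hat U(\exists x_1 \exists x_2.\, \mu(x_1, C_{x_1}) \wedge r(x_1, x_2) \wedge \mu(x_2, C_{x_2}) \wedge \psi(o/y; x_2)) \notag \\
    &= \max_{s_1, s_2 \in \mathcal{E}} [C_{x_1}(s_1) + \ukge_r(s_1, s_2) + C_{x_2}(s_2) + \hat U(\psi(o/y; s_2/x_2))]. \notag
\end{align}

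The key step is that $x_1$ appears only through $C_{x_1}(s_1)$ and $\ukge_r(s_1,s_2)$, so the inner maximization over $s_1$ can be pulled out by the distributivity of $+$ over $\max$:
\begin{align}
    \hat U[\phi](o) &= \max_{s_2 \in \mathcal{E}} \Big[ C_{x_2}(s_2) + \big(\max_{s_1 \in \mathcal{E}} [C_{x_1}(s_1) + \ukge_r(s_1, s_2)]\big) + \hat U(\psi(o/y; s_2/x_2)) \Big]. \notag
\end{align}
Defining the updated state vector $C_{x_2}'(s_2) = C_{x_2}(s_2) + \max_{s_1 \in \mathcal{E}} [C_{x_1}(s_1) + \ukge_r(s_1, s_2)]$ then yields
\[
    \hat U[\phi](o) = \hat U(\exists x_2.\, \mu(x_2, C_{x_2}') \wedge \psi(o/y; x_2)),
\]
so $x_1$ and its edge have been removed while the utility vector is preserved. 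In matrix form, $C_{x_2}'$ is obtained from the row-wise addition $M = \ukge_r +_r C_{x_1}$ followed by a column-wise maximization of $M(\cdot, s_2)$, both of which cost $O(|\mathcal{E}|^2)$, matching the complexity of Lemma~\ref{remove_leaf_nodes}.

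I expect the main obstacle to be the bookkeeping for the general edge configuration rather than the core argument: between the leaf $x_1$ and its neighbour $x_2$ there may be several edges of either orientation, some negated, and each contributes an extra term ($\ukge_{r}(s_1,s_2)$, $(\mathbf{1}-\ukge_{r})(s_1,s_2)$, or a transpose $\ukge_r^T$) to the matrix $M$ before the column-wise max is taken. I would argue that these additive terms can all be accumulated into $M$ first, after which the single interchange of $\max$ and $+$ still applies verbatim. The conceptual crux, worth emphasizing, is precisely why the hypothesis ``the existential quantifier is maximization'' is needed: it is the distributive law $a + \max_s b_s = \max_s (a + b_s)$ of the $(\max,+)$ semiring that licenses factoring the leaf variable out of the joint maximization. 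This also explains the contrast with the preceding free-variable lemma: here both endpoints are existential, so the eliminated maximization is genuinely absorbed into the neighbour's state vector rather than becoming an output coordinate of the utility vector.
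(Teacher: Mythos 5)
Your proposal is correct and follows essentially the same route as the paper's own proof: expand the joint maximization over $s_1,s_2$, use the distributivity of $+$ over $\max$ in the $(\max,+)$ semiring to factor out the inner maximization over the leaf variable $s_1$ (the paper phrases this via the matrix $M_3 = (\ukge_r +_r C_{x_1}) +_c C_{x_2}$ and $C_{x_2}'(s_2)=\max_{s_1}M_3(s_1,s_2)$, which coincides with your updated state vector), and absorb the result into the neighbour's state vector. Your added remarks on multi-edge/negated configurations and on why the maximization hypothesis is needed are consistent with the paper's treatment elsewhere and do not change the argument.
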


\begin{proof}

It will be difficult when trying to cut the leaf node $x_1$,
\begin{align}
    \hat U[\phi](o) &= \hat U(\exists x_1, x_2. \mu(x_1, C_{x_1})  \wedge  r(x_1, x_2) \wedge \mu(x_2, C_{x_2}) \wedge \psi(o)) \notag \\ 
                  &= \max_{s_1 \in \mathcal{E}, s_2 \in \mathcal{E}} [C_{x_1}(s_1) +  \ukge_r(s_1, s_2) + C_{x_2}(s_2) + \hat U(\psi(o, s_2/x_2))]  \notag \\
                  &= \max_{s_2 \in \mathcal{E}} \max_{s_1 \in \mathcal{E}} [C_{x_1}(s_1) +  \ukge_r(s_1, s_2) + C_{x_2}(s_2) + \hat U(\psi(o, s_2/x_2))].
\end{align}
While the existential quantifier is maximization, it is noteworthy that for any $s_1,s_2, o \in \mathcal{E}$,
\begin{align}
    \max_{s_1 \in \mathcal{E}} [M_3(s_1, s_2) + \hat U(\psi(o; s_2/x_2))]
    &= \max_{s_1 \in \mathcal{E}} [M_3(s_1, s_2)] + \hat U(\psi(o; s_2/x_2)) \notag \\
    &= C_{x_2}'(s_2) + \hat U(\psi(o; s_2/x_2)),
\end{align}
where $M_3 = (\ukge_r +_r C_{x_1}) +_c C_{x_2}$ and $C_{x_2}'(s_2)=\max_{s_1 \in \mathcal{E}} [M_3(s_1, s_2)]$.

Therefore, we can remove $x_1$  by updating $x_2$ as follows,
\begin{equation}
    \hat U[\phi](o) = \max_{s_2 \in \mathcal{E}} [C_{x_2}'(s_2) + \hat U(\psi(o/y; s_2/x_2))].
\end{equation}

\end{proof}
Combining the above three lemmas, we can step by step find a leaf node and remove it when the query graph has no cycles.
\begin{lemma}
If the soft query contains no circles, we can get the utility vector by removing leaf nodes when the existential quantifier is maximization.
\end{lemma}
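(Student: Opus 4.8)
The plan is to prove the statement by induction on the number of edges of the soft query graph, using the three leaf-removal lemmas as the inductive step and reducing to them by first clearing away everything that is not a variable–variable edge. First I would invoke Lemma~\ref{remove_constant_nodes} and Lemma~\ref{remove_self-loop_edges} to eliminate all constant nodes and self-loops; since both are equivalent transformations they leave the utility vector unchanged, and neither can create a cycle. After this preprocessing the graph $G_\phi$ contains only variable nodes and no self-loops, so it is a forest whose edges all join two distinct variable nodes.

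The central observation is the invariant that the graph stays acyclic throughout: deleting a leaf together with its unique incident edge from a forest yields a forest. Combined with the elementary fact that a forest with at least one edge contains a component that is a tree on $\ge 2$ vertices, and such a tree has at least two leaves, this guarantees that whenever an edge remains we can locate a leaf node. Moreover we can always locate a leaf that is an \emph{existential} variable, since at most one of the $\ge 2$ leaves can be the unique free variable $y$.

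The inductive step is a case analysis on the pair (leaf $u$, neighbor $v$). Because there is a single free variable, the pair cannot be (free, free), leaving exactly three configurations, each handled by one of the three leaf-removal lemmas: (i) $u$ existential, $v=y$; (ii) $u=y$, $v$ existential; (iii) both existential. In each case the lemma supplies an $O(|\mathcal{E}|^2)$ equivalent transformation that deletes $u$ and its edge, folds the lost constraint into the state vector of $v$, and preserves the utility vector. I would choose to always remove an existential leaf, so that case (ii) never arises at the top level and $y$ survives until the end; the edge count strictly decreases at each removal, so the process terminates with the single remaining node $y$, at which point $C_y$ equals the desired utility vector by Definition~\ref{def:utility vector} and the initialization in Step~\ref{step1}.

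The main obstacle is case (iii), removing an existential leaf whose neighbor is also existential, which is exactly where the choice of semiring becomes essential. There the transformation writes $\hat U[\phi](o) = \max_{s_2}\max_{s_1}[M_3(s_1,s_2) + \hat U(\psi(o; s_2/x_2))]$ and pulls the inner maximization over $s_1$ through the sum because $\hat U(\psi(o; s_2/x_2))$ does not depend on $s_1$; this relies on the distributivity $\max_{s_1}(a(s_1) + c) = (\max_{s_1} a(s_1)) + c$ valid for the $(\oplus,\otimes)=(\max,+)$ semiring, and would fail for a general $\oplus$. I would therefore highlight this distributivity as the crux, and remark that the companion case (ii) can alternatively be obtained by recursively solving the subgraph with the neighbor promoted to the free variable, which keeps the induction on edges well-founded even if $y$ is deleted early. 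Together these observations show that greedy leaf removal on an acyclic soft query is both well-defined and utility-preserving, yielding $C_y$ as the utility vector.
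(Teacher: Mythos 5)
Your proposal is correct and follows essentially the same route the paper intends: the paper's own justification is the single remark that ``combining the above three lemmas, we can step by step find a leaf node and remove it when the query graph has no cycles,'' and your induction on the number of edges is exactly that iteration made explicit. The details you add --- the preprocessing by constant-node and self-loop removal, the invariant that leaf deletion preserves acyclicity so a leaf (and in particular an existential leaf, since at most one leaf is the free variable) always exists while edges remain, and the identification of $\max$-over-$+$ distributivity as the step that makes the existential-leaf case sound --- are all left implicit in the paper but are consistent with its three leaf-removal lemmas.
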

\subsubsection{Step 5. Enumerate on the cycle.}
To the best of our knowledge, the only precise approach for addressing cyclic queries is performing enumeration over one existential node involved in the cycle, which reads $\hat U(\exists x.\phi(o/y; x)) = \max_{s \in \mathcal{E}} \hat U(\phi(o/y; s/x)).$ Then, we apply Step 4 to remove this fixed existential variable since this variable is equivalent to the constant variable. The query graph breaks this cycle and becomes smaller. The remaining query can be solved by applying Step 4. When solving cyclic queries, the time complexity of this algorithm is exponential.

\subsubsection{Step 6. Getting the utility vector.}
Following the aforementioned steps, the query graph will only contain the free node $y$, resulting in the formula $\mu(y, C_y)$. By definition, the desired utility vector will be $C_y$, which provides the confidence values of all the candidate entities.

\section{Uncertain Knowledge Graph Embeddings}\label{app:UKGE}
We introduce the backbone models for uncertain knowledge graph embedding. The results of changing the backbone are presented in Table~\ref{results of backbone}.

\textbf{UKGE~\cite{chen_learning_2023}} is a vector embedding model designed for uncertain knowledge graphs. It has been tested on three tasks: confidence prediction, relational fact ranking, and relational fact classification. To address the sparsity issue in the graph, UKGE utilizes probabilistic soft logic, allowing for the inclusion of additional unseen relational facts during training.

\textbf{BEUrRE~\cite{chen2021probabilistic}} is a probabilistic box embedding model that has been evaluated on two tasks: confidence prediction and relational fact ranking. This model represents each entity as a box and captures relations between two entities through affine transformations applied to the head and tail entity boxes.

\begin{table*}
\tiny
\centering
\caption{The results of answering complex soft queries with different backbone models. }
\label{results of backbone}
\begin{tabular}{cccccccccccccccc}
\toprule
&Models   &Metrics     & 1P   & 2P   & 2I   & 2IN  & 2IL  & 2M   & 2U  &3IN & IP  & IM   & INP & UP & AVG \\
\midrule

\multicolumn{15}{c}{CN15k} \\
\midrule
&\multirow{4}{*}{SRC (UKGE)}   & MAP&20.6&7.1&7.9&14.4&14.6&3.3&14.7&7.7&6.2&3.2&4.2&6.5&9.2\\
                             && NDCG &27.7&15.4&10.1&23.8&22.6&7.9&24.3&10.4&11.1&6.9&10.5&14.9&15.5\\
                            && $\rho$&21.9&6.4&0.2&5.9&14.3&11.2&32.0&-1.5&-7.0&8.4&-2.3&17.4&8.9 \\
                            &&$\tau$&15.0&2.4&-0.0&2.1&10.7&9.2&25.5&-2.0&-9.0&7.9&-4.4&13.0&5.9\\
\midrule
&\multirow{4}{*}{SRC (BEUrRE)}    & MAP&32.2&11.5&13.2&25.9&29.2&3.9&26.6&13.9&12.8&5.3&8.4&11.6&16.2\\
              & & NDCG&41.5&21.3&15.3&37.4&39.7&9.7&40.5&17.6&19.0&10.4&17.1&22.8&24.3\\
              & & $\rho$    &25.7&13.9&-2.3&11.6&19.0&17.4&33.5&-0.3&-0.4&21.3&5.2&21.5&13.8 \\
              & & $\tau$&18.7&8.7&-3.5&7.7&14.8&14.7&27.2&-1.9&-3.6&20.3&1.8&16.3&10.1\\

\midrule
\multicolumn{15}{c}{PPI5k} \\
\midrule
&\multirow{4}{*}{SRC (BEUrRE)}   & MAP & 78.2&78.4&72.9&67.0&72.0&52.0&73.7&58.6&76.4&54.2&69.2&67.8&68.4\\
              & & NDCG& 80.8&78.2&77.3&68.7&78.0&52.3&79.9&62.9&76.2&52.4&69.4&72.4&70.7\\
               && $\rho$ &77.7&82.3&57.9&50.9&80.6&54.3&83.0&38.8&76.2&47.6&68.3&82.2&66.7\\
             && $\tau$&66.6&70.9&49.9&42.7&71.0&42.9&71.6&32.7&65.6&37.1&57.2&70.4&56.5\\
\midrule
&\multirow{4}{*}{SRC (BEUrRE)}   & MAP&71.0&67.8&63.3&56.3&63.6&51.8&66.3&49.4&67.7&52.6&57.7&58.8&60.5\\
           & & NDCG &74.8&70.5&68.7&61.5&70.0&49.7&74.2&55.5&68.5&49.1&61.8&66.5&64.2\\
           & &$\rho$&72.7&76.4&51.6&47.1&73.9&51.3&77.3&39.0&68.5&43.2&63.2&75.0&61.6\\
           &&$\tau$&59.7&62.3&41.8&37.7&61.9&39.4&64.2&32.0&55.8&32.6&50.6&61.5&50.0 \\
\midrule
\multicolumn{15}{c}{O*NET20k} \\
\midrule

&\multirow{4}{*}{SRC (BEUrRE)} & MAP&24.9&6.4&70.6&26.0&63.3&5.6&32.8&68.5&7.7&7.3&5.2&7.7&27.1\\
               & & NDCG& 44.9&19.5&80.4&46.1&74.0&17.2&57.2&76.3&19.2&17.1&17.7&24.6&41.3\\
               & & $\rho$&77.9&64.8&79.1&73.7&79.6&43.7&83.3&69.4&53.8&33.3&58.0&60.7&65.3\\
               & &$\tau$ &72.0&54.9&68.6&67.6&67.3&36.9&76.0&59.2&47.6&29.1&48.9&52.4&57.3 \\
\midrule
&\multirow{4}{*}{SRC (BEUrRE)}    & MAP&29.0&8.3&65.2&29.6&55.2&7.2&32.3&62.1&9.9&8.8&6.9&9.6&27.1\\
              & & NDCG&53.4&29.2&78.8&52.4&71.0&21.5&60.2&72.7&26.5&20.4&25.5&34.1&45.8\\
              & & $\rho$    &69.0&59.1&78.1&63.6&76.4&40.1&78.3&66.8&47.6&30.4&51.2&57.2&60.2 \\
              & & $\tau$&58.4&47.1&66.6&53.6&62.8&32.0&67.4&55.9&39.2&24.9&40.5&46.8&49.9\\

\bottomrule
\end{tabular}
\end{table*}
\section{Float Embedding for query Embedding}
\label{app:float_emb}
To enable query embedding methods to handle soft requirements in soft queries, we employ floating-point encoding to map floating-point numbers into vectors. These vectors are then added to the relation projection in the query embedding method.

\subsection{Float embedding}

We consider the sinusoidal encoding $g:\mathbf{R} \to R^d$ introduced in Transformer~\citep{vaswani_attenion_2017} and map the values of $\alpha$ and $\beta$ into vector embedding, which can be formulated as:
        \begin{align}
            g(v_i) = \left\{\begin{array}{cc}
            sin(v_i/1000^{i/(2k)})   & i=2k , \\
            cos(v_i/1000^{i/(2k)}) & i=2k+1,
            \end{array}\right.
        \end{align}
where $d$ is the embedding dimension.
\subsection{Modified relation projection}
The query embedding methods usually learn a Multi-Layer Perceptron (MLP) for each relation $r$, which reads as:
\begin{equation}
    S' = \textbf{MLP}_r (S),
\end{equation}
where $S$ is an embedding. Furthermore, the modified relation projection can be expressed as:
\begin{equation}
S' = \textbf{MLP}_r(S + g(\alpha) + g(\beta))
\end{equation}
Here, $g(\alpha)$ and $g(\beta)$ are the embedding of soft requirements $\alpha$ and $\beta$, respectively. By incorporating $g(\alpha)$ and $g(\beta)$ into the relation project net, we enhance the representation of relation projection to better capture the soft requirement.

\section{Details in the Main Dataset Construction}
\label{app:dataset}
\begin{figure*}[t]
  \centering
  \includegraphics[width=1.0\textwidth]{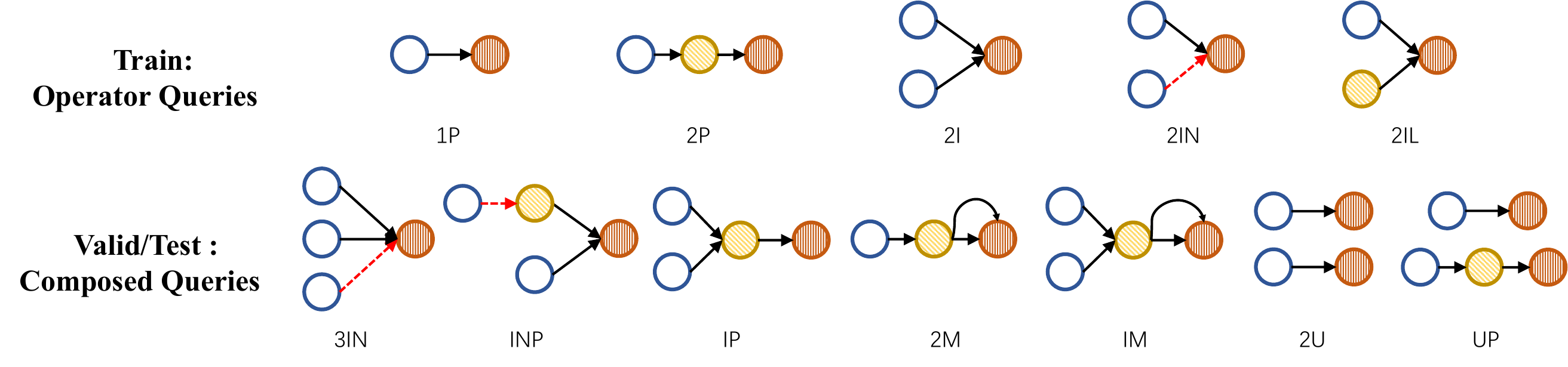}\vskip-1em
  \caption{Query structures of query types. The white, yellow, and red circles represent constant, existential, and free nodes, respectively. The negative atomic formulas are represented by red edges, while atomic formulas are represented by black edges. Like the previous naming convention  \citep{ren_beta_2020, yin2024rethinking}, we use  ''P'' for projection, ``I'' for intersection, ``N'' for negation, ``M'' for multi-edge, and ``L'' for existential leaf.}\vskip-1em
  \label{fig: query graphs}
\end{figure*}
\subsection{Uncertain knowledge graphs}

We sample soft queries from three standard uncertain knowledge graphs\footnote{We leave the exploration of Nl27k~\citep{chen2019embedding} for future work as it involves an inductive setting where the valid/test graphs contain unseen relations.}, covering diverse domains such as common sense knowledge, bioinformatics, and the employment domain.

\textbf{CN15k~\citep{chen2019embedding}} is a subset of the ConceptNet~\citep{robyn2017conceptnet}, a semantic network aimed at comprehending connections between words and phrases.

\textbf{PPI5k~\citep{chen2019embedding}} is a subset of STRING~\citep{szklarczyk2023string}, which illustrates protein-protein association networks collected from  organisms. It assigns probabilities to the intersections among proteins.

\textbf{O*NET20K~\citep{pai2021learning}} is a subset of O*NET, a dataset that describes labeled binary relations between job descriptions and skills. The associated values are to evaluate the importance of the link within the triple.

We present the statistics of these three uncertain knowledge graphs in Table~\ref{tab: Statistics of graph}.
\subsection{Soft requirements}

In the main experiment, we aim for the sampled data to allow the machine learning methods to generalize to various scenarios of soft requirements. Therefore, for  $\alpha$ necessity, we employ a hybrid strategy, randomly selecting from four modes (``zero'', ``low'', ``normal'', ``high'') for each query. As for $\beta$ importance, we utilize a random strategy, randomly choosing a decimal value for different atomic soft formulas in each query.

\subsection{Query types}
The goal of our proposed dataset is to represent the family of existential first-order soft queries systematically. However, including too many query formulas poses challenges in analyzing and evaluating. We select query graphs including operations as train queries and unitize composed query graphs to evaluate the combinatorial generalization.

For training queries, we choose 1P, 2P, 2I, and 2IN, which include soft operators. Additionally, we select 2P for chain queries~\citep{hamilton_embedding_2018}, 2M for multi-edge graphs~\citep{yin2024rethinking}, and 2IL for graphs containing ungrounded anchors~\citep{yin2024rethinking}. More complex soft query graphs can be generated from these basic graphs. We present the statistics of sampled queries in Table~\ref{tab:statistics train main} and Tabel~\ref{tab:statistics valid/test main}.

\begin{table}
\centering
\caption{The statistics of valid/test queries on the main dataset. Different query types have the same number in given uncertain knowledge graph. }
\label{tab:statistics train main}
\begin{tabular}{cccc}
\toprule
KG & CN15k & PPI5k & O*NET20k \\
\midrule
valid             & 3,000  & 2,000  & 2,000     \\
test              & 3,000  & 2,000  & 2,000     \\
\bottomrule
\end{tabular}
\end{table}

\begin{table}
\centering
\caption{The statistics of train queries on the main dataset.}
\label{tab:statistics valid/test main}
\begin{tabular}{cccccc}
\toprule
KG & 1P & 2P & 2I & 2IN &2IL  \\
\midrule
CN15k             & 52887  & 52900  & 52900 & 5300 & 5300     \\
PPI5k              & 9724  & 9750  & 9754 & 1500 & 1500    \\
PPI5k              & 18266  & 18300  & 18300 & 1850 &1850     \\
\bottomrule
\end{tabular}
\end{table}


\subsection{Evaluation protocol}
\label{evaluation protocol}
The open world assumption in uncertain knowledge graphs not only establishes new links between entities but can also potentially refine the values of existing triples. As more observed facts become available, the answers to soft queries not only increase in number but also undergo modifications in terms of their priority. Therefore, to evaluate the relevance judgment, we select several popular metrics commonly used in information retrieval \citep{liu2009learning}, including MAP, DCG, NDCG, and Kendall’s tau.

For each $q$, we denote the set of answers as $\mathcal{A}$, where $a_i \in \mathcal{A}$ represents the i-th answer based on its score, and $r(a), a \in \mathcal{A}$ denotes the predicted ranking of answer $a \in \mathcal{A}$. Our objective is to focus on the precision of answers and the associated predicted ranking information.

\textbf{Mean Average Precision (MAP)}: To define MAP, we first introduce Precision at a given position, defined as:
\begin{equation}
\textbf{P@k}(q) = |\{a \in \mathcal{A} | r(a) \geq k\}| / k.
\end{equation}
Then, Average Precision is defined as follows:
\begin{equation}
\textbf{AP}(q) = (\sum_{k=1}^{|\mathcal{A}|} \textbf{P@k}(q) \cdot l_k) / |\mathcal{A}|,
\end{equation}
where $l_k$ is a binary judgment indicating the relevance of the answer at the kth position. Mean Average Precision is the average AP value across all test queries.

\textbf{Discounted Cumulative Gain (DCG)}:
To calculate the DCG, we utilize the Reciprocal Rank as a relative score for the answers:
\begin{equation}
R(a_i) = 1/r(a_i).
\end{equation}
To incorporate the ranking position, we introduce an explicit position discount factor $\eta_i$. The DCG is then computed as:
\begin{equation}
\textbf{DCG@k}(q) = \sum_{i=1}^k R(a_i) \eta(i),
\end{equation}
where $\eta(i)$ is commonly expressed as $\eta(i) = 1 / \log_2(i+1)$.

\textbf{Normalized Discounted Cumulative Gain (NDCG)}: By normalizing the ideal Discounted Cumulative Gain denoted as $Z_k$, we obtain NDCG:
\begin{equation}
\textbf{NDCG@k}(q) = \textbf{DCG@k}(q) / Z_k.
\end{equation}
\textbf{Kendall's tau}: Kendall's tau is a statistical measure that quantifies the correspondence between two rankings. Values close to $1$ indicate strong agreement, while values close to $-1$ indicate strong disagreement.

\section{Details in Varying Soft Requirements Setting}
\label{app:varing ab setting}
In this setting, we aim to test the model's generalization ability on soft requirements under different strategies. We have chosen a pair of different strategies for each of the two parameters, resulting in a total of four groups. Specifically, we selected "zero" and "normal" for parameter "a," and "equal" and "normal" for parameter "b." Detailed statistics for this setting can be found in the table below. For each group, we follow the procedure and sample train, valid, and test queries. We present the additional results in Table~\ref{diversity result}.
\begin{table}[t]
\centering
\caption{The additional results of varying soft requirements}
\label{diversity result}
\begin{tabular}{ccccccccc}
\toprule
Mode& \multicolumn{2}{c}{Z+E} & \multicolumn{2}{c}{Z+R} & \multicolumn{2}{c}{N+E} & \multicolumn{2}{c}{N+R} \\
Metric& NDCG & $\tau$    & NDCG & $\tau$  & NDCG & $\tau$  & NDCG & $\tau$ \\
\midrule

ConE Z+E & 39.3&8.0      &35.6 & 11.6    & 31.9&30.4      &31.5 & 28.9       \\
\midrule
ConE Z+R  & 39.0&6.7      &42.2 & 14.9  &27.8&29.0     &28.7 & 30.3        \\
\midrule
ConE N+E  & 26.6&32.8      &23.6 & 27.4    & 46.3&38.7      &44.1 & 36.7        \\
\midrule
ConE N+R  & 14.0&24.7      &7.5 & 14.4    & 43.5&38.7      &42.2&38.6        \\
\midrule
\method & 34.3 & 47.8  &37.2 & 41.5     & 46.7& 51.8      & 45.8 & 49.9        \\

\bottomrule
\end{tabular}
\end{table}
\section{Details of Large Language Model Evaluation Setting}

\label{app:llm setting}
Since the entities and relations in CN15k are English words and phrases, the queries sampled from CN15k can be well understood by LLM. In our evaluation, we manually marked and removed meaningless queries. To facilitate our testing process, we selected only four candidate entities for each query. These four candidate entities have large distinctions in terms of scores. To avoid unexpected situations, we manually checked all chosen queries and confirmed that their correct answers could be selected without ambiguity. We present the total numbers and types of selected queries in Table~\ref{tab: statistic llm}.

We articulate the syntax and semantics of our proposed soft queries by using clear natural language. The provided prompts are presented in the subsequent subsection. Through combining queries with prompts, we enable LLM to select the most appropriate answer.
\begin{table}
    \centering
    \caption{The number of annotated queries.}
    \label{tab: statistic llm}
    \begin{tabular}{ccccccccccc}
    \toprule
        Query type & 1P & 2P & 2I & 2IN & 2IL & 2M & 3IN & IP & IM &SUM\\
    \midrule
        Number  & 100 & 50 & 100 & 50 & 15 & 10 & 15 & 15 & 15 &370 \\
    \bottomrule
    \end{tabular}
\end{table}
\subsection{Prompt}
\label{prompt}

\begin{verbatim}

# Background
Given a soft query containing a free variable f, there are four candidate entities for 
this variable and you need to choose the best of them to satisfy the soft query most. In 
order to do so, you can first compute the confidence value to see whether the chosen 
candidate entity satisfies the soft query after substituting the free variable with a 
given candidate entity. After computing the confidence values for each corresponding 
candidate entity, you need to pick the entity that leads to the highest confidence value. 
The detailed steps are described below:

## Definition of Soft Atomic Constraint:
A soft atomic constraint c, a.k.a. a soft atomic formula or its negation, is in the form 
of (h, r, t, \alpha, \beta) or \\neg (h, r, t, \alpha, \beta).

### Notation Description:
In each constraint, we have four different types of variables.
1. r is a relation; 
2. h and t are two terms. Each term represents an entity or a variable whose values 
are entities. And free variable is a term.
3. \alpha is called the necessity value, which represents the minimal requirement 
of the uncertainty degree of this constraint. It can be any decimal between 0.0 and 
1.0. If the confidence value is less than the necessity value, the constraint is 
not satisfied, and thus the final confidence value becomes negative infinity; 
4. \beta represents the priority of this constraint and can be any decimal 
between 0.0 and 1.0.

### Confidence Value of Soft Constraints V(c):
1. The triple (h,r,t) comes from the relation fact in the knowledge graph. In our 
setting, the relation fact r(h,t) is not boolean, and it has a confidence value 
in the range [0.0,1.0] according to its plausibility. When the constraint is 
negative, you need to first estimate r(h,t)—the confidence value of r(h,t)—and
use 1-r(h,t) as the final confidence value for this negative constraint.   
2. \alpha is the threshold in our filter function, working as follows:  
f(v, \alpha) = v,           if  v \geq \alpha,
                    	         -\infty,    if  v < \alpha.
3. \beta is a coefficient.

Thus, the final equation becomes:
V(h, r, t, \alpha, \beta) = \beta \times f(r(h,t), \alpha),
V(\\neg (h, r, t, \alpha, \beta)) = \beta \times f(1-r(h,t), \alpha).

## Definition of Conjunctive Queries \phi:
Soft conjunctive queries are composed of soft constraints.

### Notation Description:
Conjunctive query \phi = c_1 \land \dots \land c_n, where c_i is a soft constraint.

### Confidence Value of Soft Conjunctive Queries V(\phi) :
First, you need to compute the confidence values of all soft constraints in 
this soft conjunctive query. Then, you can simply sum up these confidence 
values as the final confidence value of the soft conjunctive query as follows:
V(\phi) = \sum_i V(c_i).

If the conjunctive query has an existential variable e, you should find an 
entity to replace it and then compute the confidence value of this query.

# Output Format
Please output your response in the JSON format, where the first element 
is the best candidate entity among the four options, and the second element 
is your explanation for your choice.

# Question 
Soft query:(h_1,r_1,f,\alpha_1,\beta_1) \land (h_2,r_2,f,\alpha_2,\beta_2) 
Four candidate entities: s_1, s_2, s_3, s_4

Please return the best candidate for f1 to satisfy the above soft query.

\end{verbatim}

\section{Proof for Error Analysis}\label{app: error analysis}

\subsection{Proof of Theorem~\ref{thm:1p-analysis}}
Firstly, we give the proof of Theorem~\ref{thm:1p-analysis}:

\begin{proof}
    Consider the atomic query $\psi = (a, (\alpha, \beta))$.
    Firstly, we consider whether the soft atomic query is positive or negative. 
    
    Let us assume $a =  r(y,o)$, with $y$ be the only free variable

    Then for arbitrary $r,h,t,\alpha,\beta$:

    \begin{align}
        \Pr\left(\|\hat{U}[\psi](s) - \mathcal{U}[\psi](s)\| >  \delta) \right) & =  \Pr\left(\beta\|[\hat{P}(s,r,o)]_{\alpha} - [\uf(s,r,o)]_{\alpha}\|  > \delta\right)\\
       & =  \Pr\left(\|[\hat{P}(s,r,o)]_{\alpha} - [\uf(s,r,o)]_{\alpha}\|  > \frac{\delta}{\beta}\right) 
    \end{align}

    We note that even if $a = \lnot r(y,o)$, the result is the same:

    \begin{align*}
        \Pr\left(\|\hat{U}[\psi](s) - \mathcal{U}[\psi](s)\| >  \delta) \right) &=  \Pr\left(\beta\|[1-\hat{P}(s,r,o)]_{\alpha} + [\uf(s,r,o)]_{\alpha} - 1\|  > \delta\right) \\
        &=  \Pr\left(\|[\hat{P}(s,r,o)]_{\alpha} -[\uf(s,r,o)]_{\alpha}\|  > \frac{\delta}{\beta}\right) 
    \end{align*}

    For convenience, we write $\hat{x},x$ as the abbreviation of $\hat{P}(s,r,o),\uf(s,r,o)$, correspondingly, then the initial formula becomes:

    \begin{align*}
    &\Pr\left(\|\hat{x} - x\|  > \frac{\delta}{\beta}\right)  \\
   & =    \Pr\left(\|\hat{x} - x\|  > \frac{\delta}{\beta} \mid \hat{x}>\alpha,x>\alpha \right)\Pr(\hat{x},x>\alpha) + \Pr\left(\|\hat{x}\|  > \frac{\delta}{\beta} \mid \hat{x}>\alpha,x<\alpha \right)\Pr\left(\hat{x}>\alpha,x<\alpha\right)  \\
   & +
        \Pr\left(\|x\|  > \frac{\delta}{\beta} \mid \hat{x}<\alpha,x>\alpha  \right)\Pr(\hat{x}<\alpha,x>\alpha) +
        \Pr\left(\|0\|  > \frac{\delta}{\beta} \mid \hat{x}<\alpha,x<\alpha \right)\Pr(\hat{x}<\alpha,x<\alpha) \\
        &\leq \varepsilon(\frac{\delta}{\beta})\Pr(\hat{x}>\alpha,x>\alpha) + \Pr(\hat{x}>\alpha,x<\alpha) + \Pr(\hat{x}<\alpha,x>\alpha) \\
        &= \varepsilon(\frac{\delta}{\beta}) + (1-\varepsilon(\frac{\delta}{\beta}))[\Pr(\hat{x}>\alpha,x<\alpha) + \Pr(\hat{x}<\alpha,x>\alpha)] - \varepsilon(\frac{\delta}{\beta})\Pr(\hat{x}<\alpha,x<\alpha) \\ 
        &\leq \varepsilon(\frac{\delta}{\beta}) + (1-\varepsilon(\frac{\delta}{\beta}))[\Pr(\hat{x}>\alpha,x<\alpha) + \Pr(\hat{x}<\alpha,x>\alpha)]
    \end{align*}

    Moreover, we use the Total Probability Theorem once again and assume $f(\xi)$ as the probability density function of $x$:

    \begin{align*}
        &   \Pr(\hat{x}>\alpha,x<\alpha) + \Pr(\hat{x}<\alpha,x>\alpha) \\
        &= \int_{0}^{1}\Pr(\hat{x}>\alpha \mid x=\xi<\alpha)f(\xi){\rm d}\xi +
        \int_{0}^{1}\Pr(\hat{x}<\alpha \mid x=\xi>\alpha)f(\xi){\rm d}\xi \\ 
        &= \int_{0}^{\alpha}\Pr(\hat{x}>\alpha \mid x=\xi)f(\xi){\rm d}\xi +
        \int_{\alpha}^{1}\Pr(\hat{x}<\alpha \mid x=\xi)f(\xi){\rm d}\xi
    \end{align*}

    By noting that if $\hat{x}>\alpha$ while $x=\xi<\alpha$, it must have $|\hat{x}-x|>\alpha-\xi$, we know that:

    $$\Pr(\hat{x}<\alpha \mid x=\xi>\alpha)  \leq \Pr(|\hat{x}-x|>\alpha-\xi)$$

    Therefore 

    \begin{align*}
        &\int_{0}^{\alpha}\Pr(\hat{x}>\alpha \mid x=\xi)f(\xi){\rm d}\xi +
        \int_{\alpha}^{1}\Pr(\hat{x}<\alpha \mid x=\xi)f(\xi){\rm d}\xi \\
        &\leq \int_{0}^{\alpha}\Pr(|\hat{x}-x|>\alpha-\xi)f(\xi){\rm d}\xi +
        \int_{\alpha}^{1}\Pr(|\hat{x}-x|>\xi-\alpha)f(\xi){\rm d}\xi \\
        &
        \leq \int_{0}^{\alpha}\varepsilon(\alpha-\xi)f(\xi){\rm d}\xi +
        \int_{\alpha}^{1}\varepsilon(\xi-\alpha)f(\xi){\rm d}\xi \\
        &= \int_{0}^{1}\varepsilon(|\alpha-\xi|)f(\xi){\rm d}\xi
    \end{align*}

    Therefore we finish the proof.





\end{proof}

\subsection{Proof of Theorem~\ref{thm:numerical-stability}}
Then for Theorem~\ref{thm:numerical-stability}, consider the query $\phi = \exists x_1, ..., x_n. \psi_{1} \oland \cdots \oland \psi_{m}$, where $\psi_i = (a_i,(\alpha_i,\beta_i))$, 
the final error should be no more than a linear combination of each soft atomic query:
\begin{proof}
    \begin{align*}
        &\|\hat{U}[\phi](s) - \mathcal{U}[\phi](s)\| \\
        &= \|\max_{x_1=s_1,\cdots,x_n=s_n}(\hat{U}[\psi_{1}](s) + \cdots + \hat{U}[\psi_{m}](s)) - \max_{x_1=s_1,\cdots,x_n=s_n}(\mathcal{U}[\psi_{1}](s) + \cdots + \mathcal{U}[\psi_{m}](s))\|  \\
        &\leq \|\max_{x_1=s_1,\cdots,x_n=s_n} \left([\hat{U}[\psi_{1}](s) - \mathcal{U}[\psi_{1}](s)]  + \cdots + [\hat{U}[\psi_{m}](s) - \mathcal{U}[\psi_{m}](s)]\right) \| \\
        &\leq \max_{x_1=s_1,\cdots,x_n=s_n}\left(\|\hat{U}[\psi_{1}](s) - \mathcal{U}[\psi_{1}](s)\|  + \cdots + \|\hat{U}[\psi_{m}](s) - \mathcal{U}[\psi_{m}](s) \|\right) \\
        &\leq \Sigma_{i=1}^{m} \epsilon(\alpha_i,\beta_i)
    \end{align*}

    The final line relies on the definition of $\epsilon$, which gives the upper bound of error that only depends on $\alpha,\beta$.
\end{proof}









\end{document}